\documentclass[letter]{article}

\usepackage[table]{xcolor}         %
\usepackage{iclr2022_conference,times}
\iclrfinalcopy

\usepackage[american]{babel}

\usepackage[utf8]{inputenc} %
\usepackage[T1]{fontenc}    %
\usepackage[pdfauthor={Adri\'an Javaloy and Isabel Valera},
            pdftitle={RotoGrad: Gradient Homogenization in Multitask Learning},
            pdfsubject={Machine Learning},
            pdfkeywords={multitask learning, conflicting gradients, negative transfer}]{hyperref}       %
\usepackage{url}            %
\usepackage{booktabs}       %
\usepackage{amsfonts}       %
\usepackage{nicefrac}       %
\usepackage{microtype}      %

\usepackage{tikz}
\usepackage{tikz-cd}
\usepackage{standalone}
\usepackage{graphicx}
\usepackage{wrapfig}

\usepackage[page,toc]{appendix}
\usepackage{etoc}

\usepackage{fontawesome}
\usepackage{subcaption}
\usepackage{algorithm}
\usepackage{algorithmicx}
\usepackage{algpseudocode}
\usepackage{mathtools}
\usepackage{multirow,bigdelim}
\usepackage{makecell}
\usepackage[
    separate-uncertainty=true, 
    output-exponent-marker = \text{e},
    exponent-product={},
    binary-units=true
]{siunitx}

\usepackage{csquotes}

\usepackage{amsmath,amsfonts,bm}

\usepackage{xspace}

\newcommand{\captiona}{{\em (a)}\xspace}
\newcommand{\captionb}{{\em (b)}\xspace}

\def\Figref#1{Figure~\ref{#1}}

\def\eqref#1{equation~\ref{#1}}
\def\Eqref#1{Equation~\ref{#1}}

\def\1{\bm{1}}

\def\rvz{{\mathbf{z}}}

\def\vb{{\bm{b}}}

\def\vg{{\bm{g}}}

\def\vr{{\bm{r}}}

\def\vu{{\bm{u}}}
\def\vv{{\bm{v}}}

\def\vx{{\bm{x}}}
\def\vy{{\bm{y}}}
\def\vz{{\bm{z}}}

\def\mG{{\bm{G}}}

\def\mR{{\bm{R}}}

\def\mU{{\bm{U}}}
\def\mV{{\bm{V}}}
\def\mW{{\bm{W}}}
\def\mX{{\bm{X}}}
\def\mY{{\bm{Y}}}
\def\mZ{{\bm{Z}}}

\DeclareMathAlphabet{\mathsfit}{\encodingdefault}{\sfdefault}{m}{sl}
\SetMathAlphabet{\mathsfit}{bold}{\encodingdefault}{\sfdefault}{bx}{n}

\def\sR{{\mathbb{R}}}

\def\sY{{\mathbb{Y}}}

\newcommand{\R}{\mathbb{R}}

\usepackage{color}

\usepackage{amsfonts}
\usepackage{amsmath}
\usepackage{amsthm}
\usepackage{mathtools}

\usepackage{ifthen}

\renewcommand{\epsilon}{\varepsilon}

\DeclareMathOperator*{\argmin}{argmin}
\DeclareMathOperator*{\med}{med}
\DeclareMathOperator*{\std}{std}
\DeclareMathOperator*{\minimize}{minimize}

\DeclarePairedDelimiterX{\infdivx}[2]{(}{)}{%
	#1 \delimsize\|\, #2%
}

\newcommand{\E}[2][]{\Eoperator_{#1}\left[{#2}\right]}

\renewcommand{\P}{\mathbb{P}}

\newcommand{\seq}[3]{\left\{#1_{#2}\right\}_{\ifx&#3&\else #2=1\fi}^{#3}}  %
\newcommand{\range}[2][]{{\ifx&#1&1, 2, \dots, #2\else#1_1, #1_2,\dots, #1_{#2}\fi}}

\DeclarePairedDelimiter\abs{\lvert}{\rvert}%
\newcommand{\norm}[2][]{{||#2||}_{#1}}

\newtheorem{proposition}{Proposition}[section]

\theoremstyle{definition}
\newtheorem{definition}{Definition}[section]

\usepackage{amsthm}

\makeatletter
\renewenvironment{proof}[1][\proofname]{%
	\par\pushQED{\qed}\normalfont%
	\topsep6\p@\@plus6\p@\relax
	\trivlist\item[\hskip\labelsep\bfseries#1\@addpunct{.}]%
	\ignorespaces
}{%
	\popQED\endtrivlist\@endpefalse
}
\makeatother

\def\Tabref#1{Table~\ref{#1}}

\usepackage{pifont}

\newcommand{\blue}[1]{{\color{blue} #1}}

\usepackage{xspace}

\newcommand{\needcite}[1][]{\textbf{\color{red} [cite\ifthenelse{\equal{#1}{}}{}{~{#1}}]}\xspace}

\newcommand{\msg}[1]{\textbf{\color{green!50!black} [#1]}\newline}

\newcommand{\edit}[2][]{{\textcolor{orange}{\sout{#2}}}\ifthenelse{\equal{#1}{}}{}{\ifthenelse{\equal{#2}{}}{}{$\rightarrow$}{\textcolor{green!40!black!90}{#1}}}}

\newcommand{\backbone}{f}
\newcommand{\head}{h}

\newcommand{\ours}{RotoGrad\xspace}
\newcommand{\loss}{L}

\newcommand{\leader}{\mathcal{L}}
\newcommand{\follower}{\mathcal{F}}

\newcommand{\MTL}{MTL\xspace}
\newcommand{\ie}{that is\xspace}
\newcommand{\eg}{for example\xspace}
\newcommand{\thetab}{{\boldsymbol{\theta}}}
\newcommand{\phib}{{\boldsymbol{\phi}}}

\newcommand{\Roto}{{\operatorname{roto}}}
\newcommand{\Net}{\mathcal{N}}

\newcommand{\linear}[1]{{\small \texttt{[Dense-$#1$]}}}
\newcommand{\conv}[2]{{\small \texttt{[Conv-$#1$-$#2$]}}}

\newcommand{\batchnorm}{{\small \texttt{[BN]}}}

\newcommand{\relu}{{\small \texttt{[ReLU]}}}
\newcommand{\maxl}{{\small \texttt{[Max]}}}
\newcommand{\logsoftmax}{{\small \texttt{[Log-Softmax]}}}
\newcommand{\sigmoidl}{{\small \texttt{[Sigmoid]}}}

\usepackage{etoolbox}
\usepackage{threeparttable}
\usepackage[normalem]{ulem}
\robustify\bfseries
\robustify\uline
\robustify\tnote

\usepackage{soul}

\colorlet{darkgreen}{green!60!black!}
\hypersetup{citecolor=darkgreen}

\renewcommand{\E}[2][]{\operatorname{avg}_{#1} {#2}}

\usepackage[capitalize]{cleveref}

\newcommand{\maxf}[1]{{\cellcolor[gray]{0.87}} #1}

\makeatletter
\newcommand{\ssymbol}[1]{^{\@fnsymbol{#1}}}
\makeatother

\crefformat{section}{\S#2#1#3}
\crefformat{part}{\S#2#1#3}
\crefformat{chapter}{\S#2#1#3}
\crefformat{paragraph}{\P#2#1#3}
\crefformat{subparagraph}{\P#2#1#3}

\renewcommand{\msg}[1]{}
\renewcommand{\blue}[1]{}

\title{\ours: Gradient Homogenization in \\ Multitask Learning}

\author{%
  Adri\'an Javaloy\\
  Department of Computer Science\\
  Saarland University\\
  Saarbr\"ucken, Germany \\
  \href{mailto:ajavaloy@cs.uni-saarland.de?subject=[RotoGrad] Add subject here}{\texttt{ajavaloy@cs.uni-saarland.de}} \\
   \And
   Isabel Valera \\
   Department of Computer Science\\
  Saarland University\\
  Saarbr\"ucken, Germany \\
}

\begin{document}

    \etocdepthtag.toc{mtchapter}
    \etocsettagdepth{mtchapter}{subsection}
    \etocsettagdepth{mtappendix}{none}

    \maketitle

\begin{abstract}
  Multitask learning is being increasingly adopted in applications domains like computer vision and reinforcement learning. However, {optimally exploiting its advantages} remains a major challenge due to the effect of negative transfer.
  Previous works have tracked down this issue to the disparities in gradient magnitudes and directions across tasks when optimizing the shared network parameters. %
  While recent work has acknowledged that negative transfer is a two-fold problem, 
  existing approaches fall short. These methods only focus on either homogenizing the gradient magnitude across tasks;  or greedily change the gradient directions,  overlooking future conflicts.
  In this work, we introduce \ours, an algorithm that tackles  negative transfer as a whole: 
it jointly homogenizes gradient magnitudes and  directions,  while ensuring {training convergence}.
  We show that \ours outperforms competing methods in complex problems, including multi-label classification in CelebA and computer vision tasks in the NYUv2 dataset.
  A Pytorch implementation can be found in \url{https://github.com/adrianjav/rotograd}.
\end{abstract}

\section{Introduction} \label{sec:intro}

As neural network architectures get larger in order to solve increasingly more complex tasks, the idea of jointly learning multiple tasks (\eg, depth estimation and semantic segmentation {in computer vision}) with a single network {is becoming} more and more appealing. 
This is precisely the idea of multitask learning (MTL)~\citep{Caruana93multitasklearning}, which promises  higher performance in the individual tasks and  better generalization to unseen data, while drastically reducing the number of parameters~\citep{ruder2017overview}.

Unfortunately, sharing parameters between tasks may also lead to difficulties during training as tasks compete for shared resources, often resulting in poorer results than solving individual tasks, %
a phenomenon known as \emph{negative transfer}~\citep{ruder2017overview}. %
Previous works have tracked down this issue to the two types of differences between task gradients. %
First, \emph{differences in magnitude} across tasks can make some tasks dominate the others during the learning process.
Several methods have been proposed to homogenize gradient magnitudes such as MGDA-UB~\citep{sener2018multi}, GradNorm~\citep{chen2017gradnorm}, or IMTL-G~\cite{liu2021imtl}.
However, little attention has been put towards the second source of the problem: \emph{conflicting directions} of the gradients for different tasks. 
Due to the way gradients are added up, gradients of different tasks may cancel each other out if they point to opposite directions of the parameter space, thus leading to a poor update direction for a subset or even all tasks.
Only very recently a handful of works have started to propose methods to mitigate the conflicting gradients problem, \eg,  by removing conflicting parts of the gradients~\citep{yu2020gradient}, or randomly `dropping' some elements of the gradient vector~\citep{graddrop}.

In this work we propose \ours, an algorithm that tackles negative transfer as a whole by homogenizing both gradient magnitudes and directions across tasks. 
\ours addresses the gradient magnitude discrepancies by {re-weighting} task gradients at each step of the learning, {while encouraging learning those tasks that have converged the least thus far.}
In that way, it makes sure that no task is overlooked during training. 
Additionally, instead of  directly modifying gradient directions, \ours smoothly rotates the shared feature space differently for each task, seamlessly aligning gradients 
in the long run.
As shown by  our theoretical insights,  the cooperation between gradient magnitude- and direction-homogenization ensures the stability of the overall learning process. %
Finally, we run extensive experiments to empirically demonstrate that \ours leads to stable {(convergent)} learning, scales up to complex network architectures, and outperforms competing methods in multi-label classification settings in CIFAR10 and CelebA, as well as in computer vision tasks using the NYUv2 dataset.
{Moreover, we provide a simple-to-use library to include \ours in any Pytorch pipeline.}

\begin{figure*}[t]
	\centering
    \hfill %
    \begin{subfigure}[c]{.7\textwidth}
        \centering
        \includegraphics[width=\textwidth, keepaspectratio]{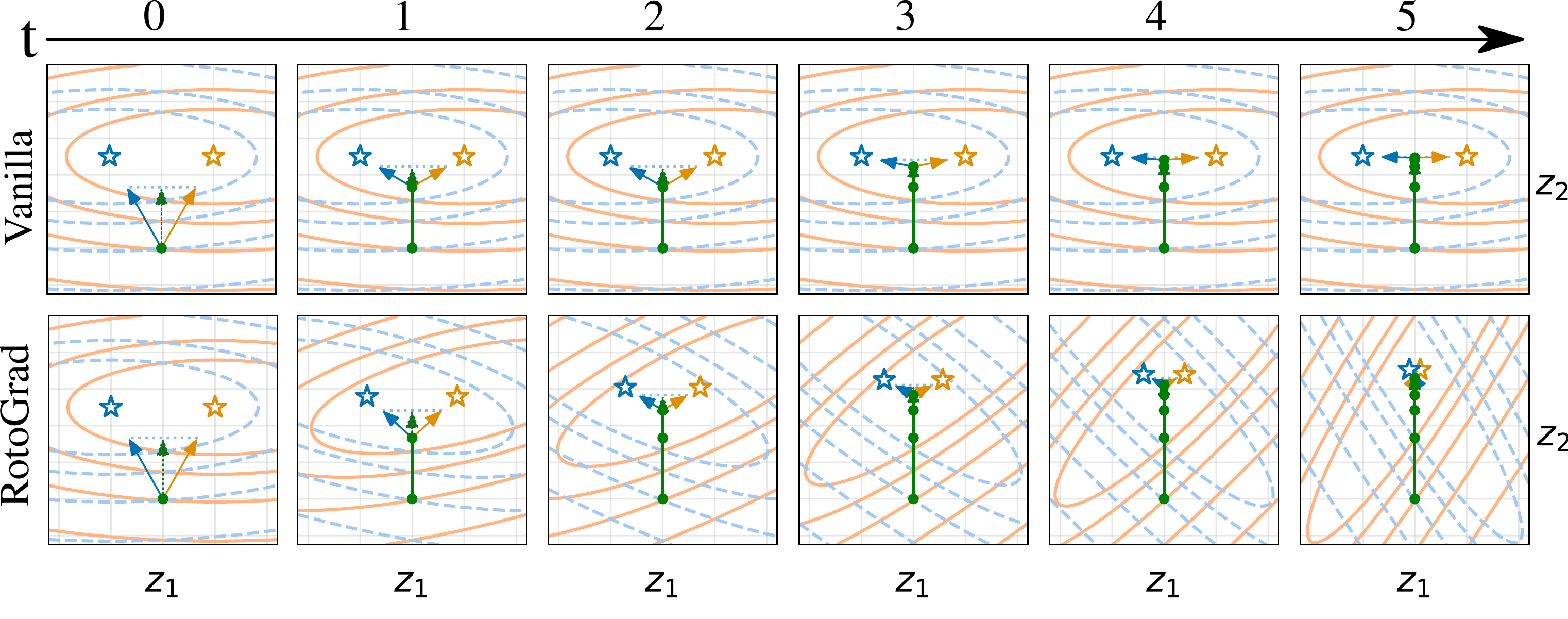}

        \caption{Convex avocado-shaped experiment.}\label{fig:toy1}
    \end{subfigure} %
    \hfill %
    \begin{subfigure}[c]{.285\textwidth}
        \centering
         \vspace{-0pt}
        \includegraphics[width=\textwidth, keepaspectratio]{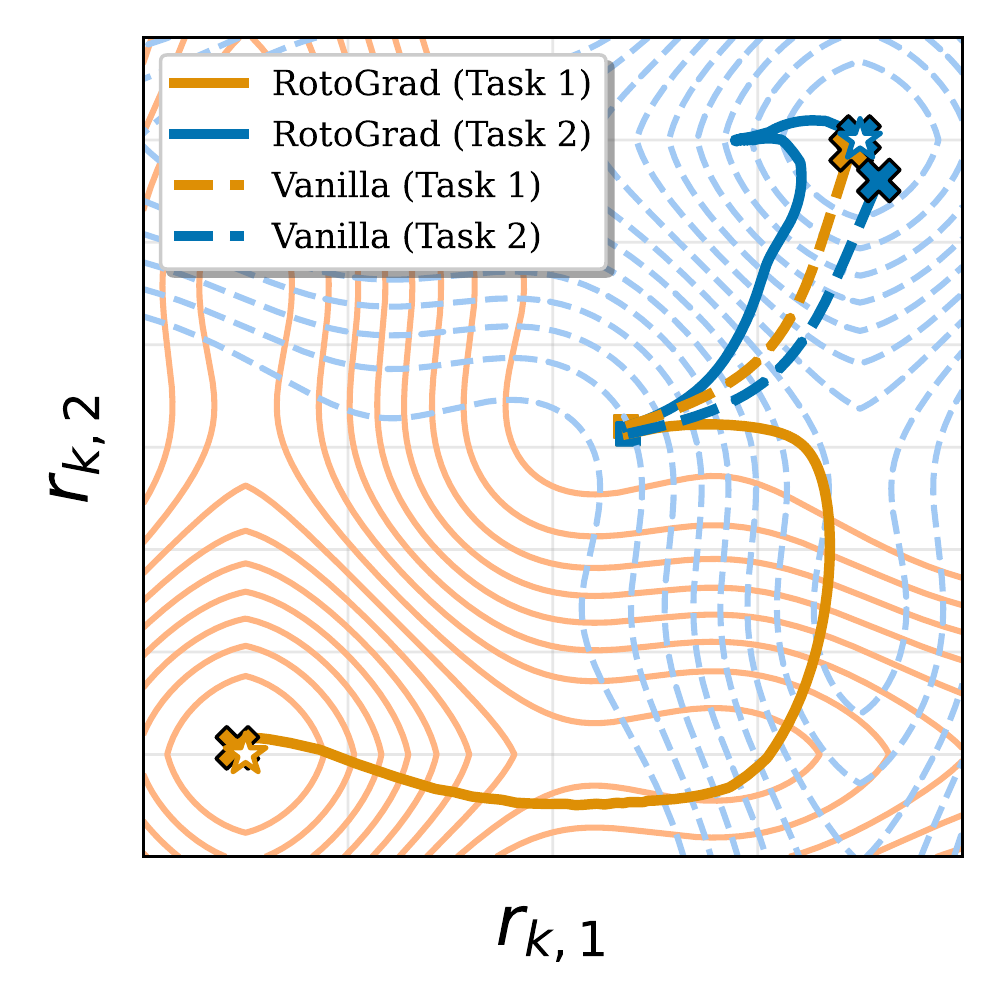}
        \vspace{-20pt}
        \caption{Non-convex experiment.}
        \label{fig:toy2}
    \end{subfigure} %
    \hfill
    \caption{{Level plots showing the evolution of two regression \MTL problems with/without \ours, see \Cref{sec:illustrative-examples}. \ours is able to reach the optimum (\faStarO) for both tasks. \captiona In the space of $\vz$, \ours rotates the function-spaces to align task gradients (\textcolor{blue}{blue}/\textcolor{orange!90!black}{orange} arrows), finding shared features $\vz$ (\textcolor{green!60!black}{green} arrow) closer to the (matched) optima. \captionb In the space of $\vr_k$, \ours rotates the shared feature $\vz$, providing per-task features $\vr_k$ that better fit each task.}}
    \label{fig:qualitative-plots}
\end{figure*} %

\section{Multitask learning and negative transfer} \label{sec:mtl}

\blue{
\begin{itemize}
	\item Introduce notation and architecture. Stress that the important part is to distinguish between shared and specific parameters.
	\item Introduce loss and gradient update. We are going to focus on feature-level gradients (due to the chain rule).
	\item Two-folded problem: magnitude and direction (two paragraph).
	\item But actually they are interleaved, changing the magnitude changes the directions and changing the directions changes the final magnitude (show figure and proposition).
\end{itemize}
}

\msg{Architecture and goal}
The goal of \MTL is to simultaneously learn $K$ different tasks, \ie, finding $K$ mappings from a common input dataset $\mX\in\R^{N\times D}$ to a task-specific set of labels $\mY_k\in\sY_k^N$.
Most settings consider a hard-parameter sharing architecture, which is characterized by two components: the \emph{backbone} and \emph{heads} networks. 
The backbone uses a set of shared parameters, $\thetab$, %
to transform each input $\vx\in\mX$ into a shared intermediate representation $\vz = \backbone(\vx; \thetab) \in \sR^d$, where $d$ is the dimensionality of $\vz$. 
Additionally, each task $k = \range{K}$ has a  head network~$\head_k$, with exclusive parameters~$\phib_k$, that takes this intermediate feature $\vz$ and outputs the prediction $\head_k(\vx) = \head_k(\vz; \phib_k)$ for the corresponding task.
This architecture is illustrated in \Figref{fig:extended-model}, where we have added task-specific rotation matrices $\mR_k$ that will be necessary for the proposed approach, \ours. Note that the general architecture described above is equivalent to the one in \Figref{fig:extended-model} when all rotations $\mR_k$ correspond to identity matrices, such that $\vr_k = \vz$ for all $k$. %

\begin{wrapfigure}[9]{r}{.5\textwidth}
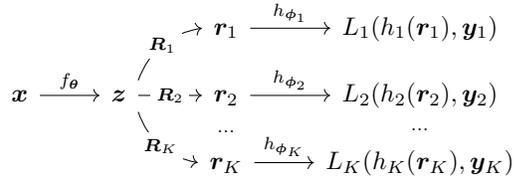
%
	\centering
	\vspace{-2em}
	\includestandalone[width=.5\textwidth]{figures/model-ours-diagram}
	\vspace{-2em}
	\caption{Hard-parameter sharing architecture, including the rotation matrices $\mR_k$ of \ours.} %
	\label{fig:extended-model}
\end{wrapfigure} %

\msg{MOO optimization and linear scalarization, and negative transfer}
\MTL aims to learn the architecture parameters $\thetab,\range[\phib]{K}$ by simultaneously minimizing all task losses%
, \ie, $\loss_k(\head_k(\vx), \vy_k)$ for $k=1,\ldots, K$.
Although this is a priori a multi-objective optimization problem~\citep{sener2018multi}, in practice a single surrogate loss consisting of a linear combination of the task losses, $L = \sum_k \omega_k \loss_k$, is 
optimized. 
While this approach leads to a simpler optimization problem, it may also trigger \emph{negative transfer} between tasks, hurting the overall \MTL performance due to an imbalanced competition among tasks for the shared parameters~\citep{ruder2017overview}.

\msg{Gradient update, focus on feature-wise gradients, two-folded}
The negative transfer problem can be studied through the updates of the shared parameters~$\thetab$. %
At each training step, $\thetab$ is updated according to a linear combination of task gradients, \hbox{$\nabla_\thetab \loss = \sum_k \omega_k \nabla_\thetab \loss_k$}, which may suffer from two problems. %
First, \textbf{magnitude differences} of the gradients across tasks may lead to a subset of tasks dominating the total gradient, and therefore to the model prioritizing them over the others.
Second, \textbf{conflicting directions} of the gradients across tasks  %
may lead to update directions that do not improve any of the tasks. %
\Figref{fig:qualitative-plots} shows an example of poor direction updates (left) as well as magnitude dominance (right).

\msg{Magnitude. Differences in magnitude. Discuss they change direction}

\msg{Direction. Differences in direction. Discuss they change magnitude and cannot be left unsolved.}

\msg{objective and focus on $\vz$}
In this work, we tackle {negative transfer as a whole by}
homogenizing tasks gradients both in magnitude and direction.
To reduce overhead, we adopt the usual practice and homogenize gradients with respect to the shared feature $\vz$ (rather than $\thetab$), as all tasks share gradient up to that point, $\nabla_\thetab \loss_k = \nabla_\thetab \vz \cdot \nabla_\vz \loss_k$.
Thus, from now on we focus on %
feature-level task gradients $\nabla_\vz \loss_k$.

\section{\ours}

\msg{reminder of what we provide}
In this section we introduce \ours, a novel algorithm that addresses the negative transfer problem as a whole. %
{\ours consists of two building blocks which, respectively, homogenize task-gradient magnitudes and directions. Moreover, these blocks complement each other and provide convergence guarantees of the network training.}
Next, we detail each of these building blocks and show how they are combined towards an effective \MTL{} learning process. 

\blue{
\begin{itemize}
	\item Our goal here, similar convergence rate (not leave anyone behind).
	\item Explain algorithm.
	\item Parameter less, it has a similar effect as normalized gradient descent in single-task learning.
	\item I know it will converge if all point to similar directions, otherwise I cannot ensure it.
\end{itemize}
}

\msg{dominance reminder. Goal equal magnitude + tasks converging at similar rates.}
\subsection{Gradient-magnitude homogenization}\label{sec:rotograd-i}
As discussed in \Cref{sec:mtl}, we aim to homogenize gradient magnitudes across tasks, as large magnitude disparities can lead to a subset of tasks dominating the learning process. Thus, the first goal of \ours is to homogenize the magnitude of the gradients across tasks at each step of the training. 

\msg{show how it is easy to make them have the same magnitude, 1 dof}
Let us denote the feature-level task gradient of the $k$-th task for the $n$-th datapoint, at iteration $t$, by \hbox{$\vg_{n,k} \coloneqq \nabla_\vz \loss_k(\head_k(\vx_n), \vy_{n,k})$}, and its batch versions by $\mG_k^\top \coloneqq [\vg_{1,k}, \vg_{2,k}, \dots, \vg_{B,k}]$, where $B$ is the batch size. 
Then, equalizing gradient magnitudes amounts to finding weights $\omega_k$ that normalize and scale each gradient $\mG_k$, \ie,
\begin{equation}
    \norm{\omega_k\mG_k} = \norm{\omega_i\mG_i} \quad \forall i \iff \omega_k \mG_k = \frac{C}{\norm{\mG_k}} \mG_k = C \mU_k \quad \forall k, \label{eq:homogenize-magnitudes}
\end{equation}
where $\mU_k \coloneqq \frac{\mG_k}{\norm{\mG_k}} $ denotes the normalized task gradient and $C$ is the target magnitude for all tasks. 
\msg{degree of freedom, asumption}
Note that, in the above expression, $C$ is  a free parameter that we need to select. 

In \ours, we select $C$ such that  all tasks converge at a similar rate. %
We motivate this choice by the fact that, by scaling all gradients, we change their individual step size, interfering with the convergence guarantees provided by their Lipschitz-smoothness (for an introduction to optimization see, \eg, \citep{nesterov2018lectures}).
Therefore, we seek for the value of $C$ providing the best step-size for those tasks that have converged the least up to iteration $t$.
\msg{similar rate of (relative) convergence -> weighted norm average}
Specifically, we set $C$ to be a convex combination of the task-wise gradient magnitudes, \hbox{$C \coloneqq \sum_k \alpha_k \norm{\mG_k}$}, where the weights $\range[\alpha]{K}$ measure the relative convergence of each task and sum up to one, \ie,
\begin{equation}
    \alpha_k = \frac{\norm{\mG_k}/\norm{\mG_k^0}}{\sum_i \norm{\mG_i} / \norm{\mG_i^0}}, %
\end{equation}
with $\mG_k^0$ being the initial gradient of the $k$-th task, i.e., the gradient at iteration $t=0$ of the training.

\msg{pros: parameter-less, hyperparameter-less, regularizing effect to escape from saddle points}
As a result, we obtain a (hyper)parameter-free approach that equalizes the gradient magnitude across tasks to encourage learning slow-converging tasks. %
Note that the resulting approach resembles
Normalized Gradient Descent (NGD)~\citep{cortes2006finite} for single-task learning, which has been proved to quickly escape saddle points during optimization~\citep{murray2017revisitingngd}.
Thus, we expect a similar behavior for \ours, where slow-converging tasks will force quick-converging tasks to escape from saddle points.%

\msg{cons: it may diverge in the presence of conflicting gradients, but we will solve it}
Whilst the algorithm works well in general, its simplicity also facilitates unfavorable settings. For example, in the presence of noisy tasks that do not progress; or in scenarios where, when one task improves, there is always another task that deteriorates.
In \cref{app:proofs} we show that, when gradients do not conflict in direction with each other (which we pursue next), following the gradient $C\sum_k\mU_k$ improves all task losses for the given batch. 
This result, while simple, provides insights in favor of having as \textit{desideratum} of an efficient \MTL{} pipeline the absence of conflicting gradients.

\subsection{Gradient-direction homogenization}
\label{sec:rotograd-ii}

\blue{
\begin{itemize}
	\item Introduce intuition, idea.
	\item This is the most novel thing: we change the gradient directions over time through rotations.
	\item 
\end{itemize}
}

\msg{Intro, what we provide.}
In the previous subsection, we have shown that avoiding conflicting gradients may not only be necessary to avoid negative transfer, but also to ensure the stability of the training. 
In this section we introduce the second building block of \ours, an algorithm that homogenizes task-gradient directions. 
The main idea of this approach is to 
smoothly rotate the feature-space $\vz$ in order to 
reduce the gradient conflict between tasks---in following iterations---of the training by  bringing 
(local) optima for different tasks closer to each other (in the parameter space). 
 As a result, it complements the previous magnitude-scaling approach and reduces the likelihood of the training to diverge.

\msg{Intuition}
In order to homogenize gradients, for each task $k=1, \ldots, K$, \ours introduces a matrix $\mR_k$ so that, instead of optimizing $\loss_k(\vz)$ with $\vz$ being the last shared representation, we optimize an equivalent (in optimization terms, as it is a bijective mapping) loss function $\loss_k(\mR_k\vz)$. 
As we are only interested in changing directions (not the gradient magnitudes), we choose $\mR_k \in SO(d)$ to be a rotation matrix\footnote{The special orthogonal group, $SO(d)$, denotes the set of all (proper) rotation matrices of dimension $d$.} leading to per-task representations $ \vr_k \coloneqq \mR_k\vz$. 
\ours thus extends the standard MTL architecture by adding task-specific rotations before each head, as depicted in \Figref{fig:extended-model}.

\msg{pose the actual problem}
Unlike all other network parameters, matrices $\mR_k$ do not seek to reduce their task's loss. 
Instead, these additional parameters are optimized to reduce the direction conflict of the gradients across tasks.
To this end, for each task we optimize $\mR_k$ to maximize the batch-wise cosine similarity or, equivalently, to minimize
\begin{equation}
    \leader^k_{\operatorname{rot}} \coloneqq - \sum_n \langle \mR_k^\top \, \widetilde{\vg}_{n,k}, \vv_n \rangle, \label{eq:roto-loss}
\end{equation}
where $\widetilde{\vg}_{n,k} \coloneqq \nabla_{\vr_k} \loss_k(\head_k(\vx_n), \vy_{n,k}))$ (which holds that $\vg_{n,k} = \mR_k^\top\widetilde{\vg}_{n,k}$) and $\vv_n$ is the target vector that we want all task gradients pointing towards. 
We set the  target vector $\vv_n$ to be the gradient we would have followed if all task gradients weighted the same, \ie, $\vv_n \coloneqq \frac{1}{K} \sum_k \vu_{n,k}$, where $\vu_{n,k}$ is {a row vector} of the normalized batch gradient matrix $\mU_k$, as defined before.
\msg{joint bi-level problem}
As a result, in each training step of \ours we simultaneously optimize the following two problems:
\begin{equation}
\Net\text{{\small etwork}:}~\minimize_{\thetab, \{\phib\}_k}~\sum_k \omega_k\,\loss_k. \label{eq:stackelberg-rotograd}, \qquad 
\mathcal{R}\text{{\small otation}:}~\minimize_{\{\mR_k\}_k}~\sum_k \leader^k_{\operatorname{rot}}
\end{equation}%
\msg{how we solve this issue}
The above problem can be interpreted as a Stackelberg game: 
 a two player-game in which \emph{leader} and \emph{follower} alternately make moves in order to minimize their respective losses, $\loss_l$ and $\loss_f$, and the leader knows what will be the follower's response to their moves.
Such an interpretation allows us to derive simple guidelines  to guarantee {training convergence}---\ie, that the network loss does not oscillate as a result of optimizing the two different objectives in \Eqref{eq:stackelberg-rotograd}. 
Specifically, following \citet{fiez2019convergence}, we can ensure that 
{problem~\ref{eq:stackelberg-rotograd} converges}
as long as the rotations' optimizer (leader) is a slow-learner compared with the network optimizer (follower).
That is, as long as we make the rotations' learning rate decrease faster than that of the network, we know that \ours will converge to a local optimum for both objectives. 
A more extensive discussion can be found in \Cref{app:stackelberg}.

\subsection{\ours: the full picture} \label{sec:rotograd-iii}
After the two main building blocks of \ours, we can now summarize the overall proposed approach in Algorithm~\ref{alg:rotograd}.
At each step, \ours first homogenizes the gradient magnitudes such that there is no dominant task and the step size is set by the slow-converging tasks. 
{Additionally, \ours smoothly updates the rotation matrices---using the local information given by the task gradients---to seamlessly align task gradients in the following steps, thus reducing direction conflicts.}

\begin{algorithm}
    \caption{Training step with \ours.}
    \label{alg:rotograd}
    \textbf{Input} input samples $\mX$, task labels $\{\mY_k\}$, network's (\ours's) learning rate $\eta$ ($\eta_\Roto$) \\
    \textbf{Output} backbone (heads) parameters $\thetab$ ($\{\phib_k\}$), \ours's parameters $\{\mR_k\}$ 
    \begin{algorithmic}[1]
        \State compute shared feature $\mZ = \backbone(\mX; \thetab)$
        \For {$k = \range{K}$}
            \State compute task-specific loss $\loss_k = \sum_n \loss_k(\head_k(\mR_k\vz_{n}; \phib_k), \vy_{n,k})$
            \State compute gradient of shared feature $\mG_k = \nabla_{\vz} \loss_k$
            \State compute gradient of task-specific feature $\widetilde{\mG}_k = \mR_k\mG_k$ \Comment{Treated as constant w.r.t. $\mR_k$.}
            \State compute unitary gradients $\mU_k = \mG_k / \norm{\mG_k}$
            \State compute relative task convergence $\alpha_k = \norm{\mG_k}/\norm{\mG_k^0}$
        \EndFor
        \State make $\{\alpha_k\}$ sum up to one $[\range[\alpha]{K}] = {[\range[\alpha]{K}]}/{\sum_k \alpha_k}$
        \State compute shared magnitude $C = \sum_k \alpha_k \norm{\mG_k}$
        \State update backbone parameters $\thetab = \thetab - \eta C \sum_k \mU_k$
        \State compute target vector $\mV = \frac{1}{K} \sum_k \mU_k$
        \For {$k = \range{K}$}
            \State compute \ours's loss $\loss_k^{\Roto} = -\sum_n \langle \mR_k^\top \widetilde{\vg}_{n,k}, \vv_n \rangle$
            \State update \ours's parameters $\mR_k = \mR_k - \eta_\Roto \nabla_{\mR_k} \loss_k^\Roto$
            \State update head's parameters $\phib_k = \phib_k - \eta \nabla_{\phib_k} \loss_k$
        \EndFor
    \end{algorithmic}
\end{algorithm}

\subsection{Practical considerations} \label{sec:scaling}

In this section, we discuss the main practical considerations to account for when implementing \ours and propose efficient solutions. 

\msg{Unconstrained optimization}
\textbf{Unconstrained optimization.} 
As previously discussed, parameters $\mR_k$ are defined as rotation matrices, and thus the \emph{Rotation} optimization in problem~\ref{eq:stackelberg-rotograd} is a constrained problem.
While this would typically imply using expensive algorithms like Riemannian gradient descent~\citep{absil2009optimization}, we can leverage recent work on manifold parametrization~\citep{lezcano2019cheap} and, instead, apply unconstrained optimization methods by automatically\footnote{For example, Geotorch~\citep{lezcano2019trivializations} makes this transparent to the user.} parametrizing $\mR_k$ via exponential maps on the Lie algebra of $SO(d)$.

\msg{Memory complexity and time complexity}
\textbf{Memory efficiency and time complexity.}
As we need one rotation matrix per task, we have to store $O(Kd^2)$ additional parameters.
In practice, we only need ${Kd(d-1)}/{2}$ parameters due to the aforementioned parametrization and, in most cases, this amounts to a small part of the total number of parameters.
Moreover, as described by~\citet{lezcano2019cheap}, parametrizing $\mR_k$ enables efficient computations compared with traditional methods, with a time complexity of $O(d^3)$ independently of the batch size.
In our case, the time complexity is of $O(Kd^3)$, which scales better with respect to the number of tasks than existing methods (\eg, $O(K^2d)$ for PCGrad~\citep{yu2020gradient}).
Forward-pass caching and GPU parallelization 
can further reduce training time.

\msg{Only a subspace, block-diagonal rotation matrices}
\textbf{Scaling-up \ours.}
Despite being able to efficiently compute and optimize the rotation matrix $\mR_k$, in %
domains like computer vision, where
the size $d$ of the shared representation $\vz$ is large, the time complexity for updating the rotation matrix may become comparable to the one of the network updates. 
In those cases, we propose to only rotate a subspace of the feature space, \ie, rotate only $m << d$ dimensions of $\vz$.
Then, we can simply apply a transformation of the form $\vr_k = [\mR_k\vz_{1:m}, \vz_{m+1:d}]$, where $\vz_{a:b}$ denotes the elements of $\vz$ with indexes $a, a+1, \dots, b$. 
While there exist other possible solutions, such as using block-diagonal rotation matrices $\mR_k$, we defer them to future work.

\section{Illustrative examples} \label{sec:illustrative-examples}

\msg{What we do, appendix}
In this section, we illustrate the behavior of \ours in {two}  synthetic scenarios, providing clean qualitative results about its effect on the optimization process.
\Cref{app:setups} provides  a detailed description of the experimental setups.

\msg{Symmetric multitask regression problems, introduce}
To this end, we propose two different multitask regression problems of the form
\begin{equation}
    \loss(\vx) = \loss_1(\vx) + \loss_2(\vx) =  \varphi(\mR_1\backbone(\vx; \thetab), 0) + \varphi(\mR_2\backbone(\vx; \thetab), 1), \label{eq:illustrative}
\end{equation}
where $\varphi$ is a test 
function with a single global optimum whose position is parametrized by the second argument, \ie, both tasks are identical (and thus related) up to a translation.
We use a single input $\vx\in\sR^2$ and drop task-specific network parameters.
As backbone, we take a simple network of the form $\vz = \mW_2 \max(\mW_1\vx + \vb_1, 0) + \vb_2$ with $\vb_1\in\sR^{10}, \vb_2\in\sR^2$, and $\mW_1, \mW_2^\top \in \sR^{10\times2}$.

\msg{Describe $\varphi$ and explain plots}
For the first experiment we choose a simple (avocado-shaped) convex objective function and, for the second one, we opt for a non-convex function with several local optima and a single global optimum.
Figure~\ref{fig:qualitative-plots} shows the training trajectories in the presence (and absence) of \ours in both experiments, depicted as level plots in the space of $\vz$ and $\vr_k$, respectively. %
\msg{explain both results individually}
We can observe that in the first experiment~(\Figref{fig:toy1}), \ours finds both optima %
by rotating the feature space and matching the (unique) local optima of the tasks.
{Similarly, the second experiment (\Figref{fig:toy2}) shows that, as we have two symmetric tasks and a non-equidistant starting point, in the vanilla case the optimization is dominated by the task with an optimum closest to the starting point. \ours avoids this behavior by equalizing gradients and, by aligning gradients, is able to find the optima of both functions.}

\section{Related Work}

\blue{
\begin{itemize}
	\item Two types of solutions, architecture and gradient based solutions.
	\item The architecture-based we focus on hard parameter sharing for simplicity, although it could be applied to any architecture with clear shared-specific parameters such as mtan.
	\item We also leave aside solutions based on task-grouping, since they are complementary and orthogonal to this work.
	\item Regarding gradient homogenization solutions there are two clear types: task-weighting and the others. Briefly explain what they consist.
\end{itemize}
}

Understanding and improving the interaction between tasks is one of the most fundamental problems of MTL, since any improvement in this regard would translate to all MTL systems. 
Consequently, several approaches to address this problem have been adopted in the literature. %
Among the different lines of work, the one most related to the present work is gradient homogenization.

\textbf{Gradient homogenization.} 
Since the problem is two-fold, there are two main lines of work.
On the one hand, we have task-weighting approaches that focus on alleviating magnitude differences.
Similar to us, GradNorm~\citep{chen2017gradnorm} attempts to learn all tasks at a similar rate, yet they propose to learn these weights as parameters. 
Instead, we provide a closed-form solution in \Eqref{eq:homogenize-magnitudes}, and so does IMTL-G~\cite{liu2021imtl}. 
However, IMTL-G scales all task gradients such that all projections of $\mG$ onto $\mG_k$ are equal.
MGDA-UB~\citep{sener2018multi}, instead, adopts an iterative method based on the Frank-Wolfe algorithm in order to find the set of weights $\{\omega_k\}$ (with $\sum_k\omega_k=1$) such that $\sum_k \omega_k \mG_k$ has minimum norm.
On the other hand, recent works have started to put attention on the conflicting direction problem. 
\citet{maninis2019attentive} and \citet{sinha2018gradient} proposed to make task gradients statistically indistinguishable via adversarial training.
More recently, PCGrad~\citep{yu2020gradient} proposed to drop the projection of one task gradient onto another if they are in conflict,
whereas GradDrop~\citep{graddrop} randomly drops elements of the task gradients based on a sign-purity score.
Contemporaneously to this work, improved versions of MGDA~\citep{desideri2012multiple} and PCGrad have been proposed by \citet{liu2021conflictaverse} and \citet{wang2021gradient}, respectively.

The literature also includes approaches which, while orthogonal to the gradient homogenization, are \textbf{complementary to our work} and thus could  be used along with \ours. Next, we provide a brief overview of them. 
A prominent approach for MTL is {task clustering}{, \ie, selecting which tasks should be learned together}.  
This approach dates back to the original task-clustering algorithm~\citep{thrun1996discovering}, but new work in this direction keeps coming out \citep{standley2019tasks,zamir2018taskonomy,shen2021variational,fifty2021efficiently}.
Alternative approaches, \eg, scale the loss of each task differently based on different criteria such as task uncertainty~\citep{kendall2018multi}, task prioritization~\citep{guo2018dynamic}, or similar loss magnitudes~\citep{liu2021imtl}.
Moreover, while most models fall into the hard-parameter sharing umbrella, there exists other architectures in the literature.
Soft-parameter sharing architectures~\citep{ruder2017overview}, \eg, do not have shared parameters but instead impose some kind of shared restrictions to the entire set of parameters.
An interesting approach consists in letting the model itself learn which parts of the architecture should be used for each of the tasks~\citep{guo2020learning,misra-stitch,sun2020adashare,vandenhende2019branched}.
Other architectures, such as MTAN~\citep{DBLP:mtan}, make use of task-specific attention to select relevant features for each task.
Finally, %
similar issues %
have also been studied in other domains like meta-learning~\citep{flennerhag2019meta} and continual learning~\citep{lopez2017gradient}.

\section{Experiments} \label{sec:experiments}

\blue{
\begin{itemize}
    \item Synthetic experiments, we visualize and test \ours in extreme scenarios 
    \item Stability results, we show the effect of a slow-learner in the stability of the training dynamics
    \item TODO
\end{itemize}
}

\msg{Explain what we are going to show/answer}
In this section we assess the performance of \ours on a wide range of datasets and \MTL{} architectures. 
First, we check the effect of the learning rates of the rotation and network updates on the stability of \ours.
Then, with the goal of applying \ours to scenarios with large sizes of $\vz$, we explore the effect of rotating only a subspace of $\vz$. %
Finally, we compare our approach with competing \MTL{}  solutions in the literature, showing that \ours consistently outperforms all existing  methods.
Refer to \Cref{app:experiments} for a more details on the experiments and additional results.

\textbf{Relative task improvement.}
Since MTL uses different metrics for different tasks, throughout this section we group results by means of the relative task improvement, first introduced by~\citet{maninis2019attentive}.
Given a task $k$, and the metrics obtained during test time by a model, $M_k$, and by a baseline model, $S_k$, which consists of $K$ networks trained on each task individually, the relative task improvement for the $k$-th task is defined as
\begin{equation}
    \Delta_k \coloneqq 100 \cdot (-1)^{l_k} \frac{M_k - S_k}{S_k},
\end{equation}
where $l_k=1$ if $M_k < S_k$ means that our model performs better than the baseline in the $k$-th task, and $l_k=0$ otherwise.
We depict our results using different statistics of $\Delta_k$ such as its mean~($\E[k]{\Delta_k}$), maximum ($\max_k \Delta_k$), and median ($\med_k\Delta_k$) across tasks.

\textbf{Statistical significance.} We highlight  significant improvements according to a one-sided paired t-test ($\alpha = 0.05$), with respect to MTL with vanilla optimization (marked with $\dagger$ in each table).

\subsection{Training stability} \label{sec:stability}

\begin{wrapfigure}[12]{r}{.4\textwidth}%
    \centering
    \vspace{-3em}
    \includegraphics[keepaspectratio, width=.4\columnwidth]{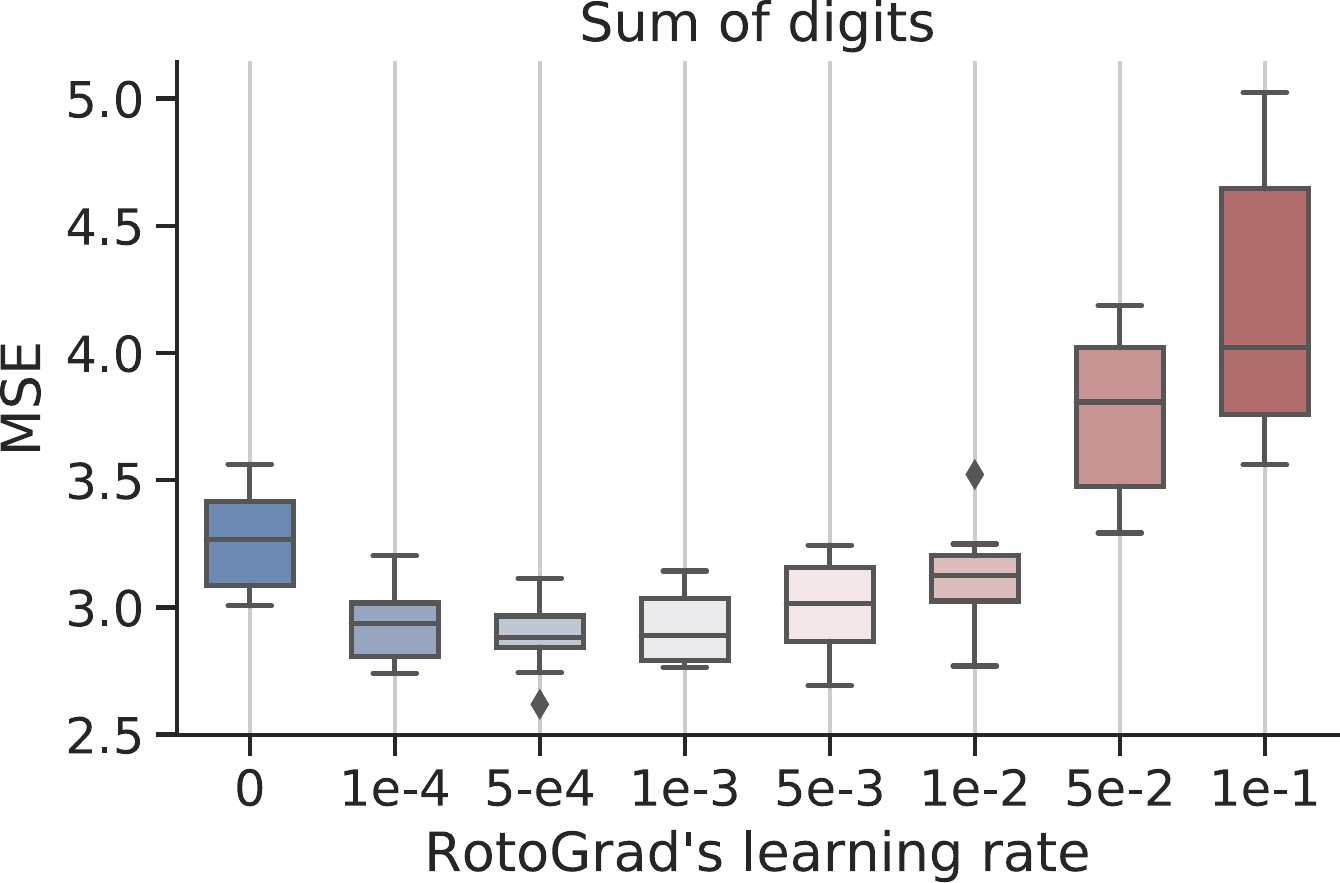}
    \caption{Test error on the sum of digits task for different values of \ours's learning rate on multi-MNIST. 
    }
    \label{fig:learning_rate}
\end{wrapfigure}

\msg{What we want to show}
At the end of \Cref{sec:rotograd-ii} we discussed that, by casting problem~\ref{eq:stackelberg-rotograd} as a Stackelberg game, we {have convergence} guarantees 
{when} %
the rotation optimizer is the slow-learner. Next, we empirically show this necessary condition. 

\msg{Experimental setup}
\textbf{Experimental setup.} 
Similar to~\citep{sener2018multi}, we use a multitask version of MNIST~\citep{lecun2010mnist} where each image is composed of a left and right digit, and 
use as backbone a reduced version of LeNet~\citep{lecun1998gradient} with light-weight heads. %
Besides the left- and right-digit classification proposed in~\citep{sener2018multi}, we consider three other quantities to predict: i)~sum of digits; ii)~parity of the digit product; and iii)~number of active pixels.
The idea here is to enforce all digit-related tasks to cooperate (positive transfer), while the (orthogonal) image-related task should not disrupt these learning dynamics. %

\msg{Results}
\textbf{Results.} \Figref{fig:learning_rate} shows the effect---averaged over ten independent runs---of changing the rotations' learning rate in terms of test error in the sum task, while the rest of tasks are shown  in \Cref{app:extra-results}. %
We observe that, the bigger the learning rate is, in comparison to that of the network's parameters (\num{1e-3}), the higher and noisier the test error becomes.
MSE keeps decreasing as we lower the learning rate, reaching a sweet-spot at half the network's learning rate (\num{5e-4}).
For smaller values, the rotations' learning is too slow and results start to resemble those of the vanilla case, in which no rotations are applied {(leftmost box in \Figref{fig:learning_rate}).} %

\subsection{\ours building blocks} \label{sec:building-blocks}

In this section, we empirically evaluate to which extent each of the \ours building blocks, that we denote \textit{Scale Only}~(\cref{sec:rotograd-i}) and \textit{Rotate Only}~(\cref{sec:rotograd-ii}), contribute to the performance gain in MTL.

\textbf{Experimental setup.} We test all methods on three different tasks of NYUv2~\citep{DBLP:nyuv2}: 13-class semantic segmentation; depth estimation; and normal surfaces estimation.
To speed up training, all images were resized to $288\times384$ resolution; and data augmentation was applied to alleviate overfitting.
As \MTL architecture, we use SegNet~\citep{DBLP:SegNet} where the decoder is split into three convolutional heads. %
This is the same setup as that of \citet{DBLP:mtan}.

\textbf{Results.} 
The three top rows of \Cref{tab:nyuv2} show the performance of \ours and its both  components in isolation. All the methods with the same number of parameters. 
Compared to Vanilla optimization (4th row), \textit{Rotate Only} improves all metrics 
by homogenizing gradient directions.
\textit{Scale Only} avoids overlooking the normal estimation task and improves on segmentation by homogenizing gradient magnitudes, at the expense of higher depth estimation error.
Remarkably, \ours exploits its scaling and rotation components to obtain the best results in semantic segmentation and depth estimation, while still achieving comparable  performance in the normal estimation task.

{

\sisetup{round-mode = places, round-precision = 2, detect-weight=true,detect-inline-weight=math}
\begin{table*}[t]
    \centering
    \caption{Median (over five runs) on the NYUv2 dataset. \ours obtains great performance in segmentation and depth tasks, and significantly improves the results on normal surfaces. ${\Delta_S}$, ${\Delta_D}$, and ${\Delta_N}$ denote the relative task improvement for each task.} %
    \label{tab:nyuv2}
    \resizebox{\textwidth}{!}{
    \begin{tabular}{ll|S[round-precision=1, table-format=3.1]S[round-precision=1, table-format=2.1]S[round-precision=1, table-format=2.1]|SSSSSSSSS} \toprule
    & & \multicolumn{3}{c|}{\multirow{2}{*}{\shortstack{Relative\\improvements~$\uparrow$}}} & \multicolumn{2}{c}{\multirow{1}{*}{\shortstack{\textbf{S}egmentation~$\uparrow$}}} & \multicolumn{2}{c}{\multirow{1}{*}{\shortstack{\textbf{D}epth~$\downarrow$}}} & \multicolumn{5}{c}{\textbf{N}ormal Surfaces} \\
     & & \multicolumn{3}{c|}{} & \multicolumn{2}{c}{} & \multicolumn{2}{c}{} & \multicolumn{2}{c}{Angle Dist.~$\downarrow$} & \multicolumn{3}{c}{Within $t^{\circ}$~$\uparrow$}  \\ 
    & Method & {${\Delta_S}$} & {${\Delta_D}$} & {${\Delta_N}$} & {\small mIoU} & \multicolumn{1}{c}{\small Pix Acc} & {\small Abs.} & \multicolumn{1}{c}{\small Rel.} & {\small Mean} & {\small Median} & {\small $11.25$} & {\small $22.5$} & \multicolumn{1}{c}{\small $30$}  \\ \midrule
     & Single & 0.00 & 0.00 & \maxf{0.00} & 39.21 & 64.59 & 0.70 & 0.27 & \maxf{25.09} & \maxf{19.18} & \maxf{30.01} & \maxf{57.33} & \maxf{69.30} \\ \midrule \midrule
    \multirow{9}{*}{\rotatebox[origin=c]{90}{\shortstack{With $\mR_k$ $(m=1024)$}}} & Rotate Only & \maxf{3.28} & \maxf{20.49} & \maxf{-6.56} & \maxf{39.63} & \maxf{66.16} & \maxf{0.53} & \maxf{0.21} & \maxf{26.12} & \maxf{20.93} & \maxf{26.85} & \maxf{53.76} & \maxf{66.50} \\ %
    & Scale Only & \maxf{-0.27} & \maxf{20.01} & \maxf{-7.90} & \maxf{38.89} & \maxf{65.94} & 0.54 & 0.22 & \maxf{26.47} & \maxf{21.24} & \maxf{26.24} & \maxf{53.04} & \maxf{65.81} \\
    & \ours & \maxf{1.83} & \maxf{24.04} & \maxf{-6.11} & \maxf{39.32} & \maxf{66.07} & \maxf{0.53} & 0.21 & \maxf{26.01} & \maxf{20.80} & \maxf{27.18} & \maxf{54.02} & \maxf{66.53} \\ \cmidrule{2-14} %
     & Vanilla & -2.66 & 20.58 & -25.70 & 38.05 & 64.39 & 0.54 & 0.22 & 30.02 & 26.16 & 20.02 & 43.47 & 56.87 \\  \cmidrule{2-14}
    & GradDrop & -0.90 & 13.97 & -25.18 & 38.79 & 64.36 & 0.59 & 0.24 & 29.80 & 25.81 & 19.88 & 44.08 & 57.54 \\
    & PCGrad & -2.67 & 20.47 & -26.31 & 37.15 & 63.44 & 0.55 & \maxf{0.22} & 30.06 & 26.18 & 19.58 & 43.51 & 56.87 \\
    & MGDA-UB & -31.23 & -0.65 & \maxf{0.59} & 21.60 & 51.60 & 0.77 & 0.29 & \maxf{24.74} & \maxf{18.90} & \maxf{30.32} & \maxf{57.95} & \maxf{69.88} \\
    & GradNorm & -0.55 & 19.50 & \maxf{-10.45} & 37.22 & 63.61 & 0.54 & 0.22 & \maxf{26.68} & \maxf{21.67} & \maxf{25.95} & \maxf{52.16} & \maxf{64.95} \\
    & IMTL-G & -0.32 & \maxf{17.56} & \maxf{-7.46} & 38.38 & \maxf{64.66} & \maxf{0.54} & 0.22 & \maxf{26.38} & \maxf{21.35} & \maxf{26.56} & \maxf{52.84} & \maxf{65.69} \\ \cmidrule{1-14}
    \multirow{6}{*}{\rotatebox[origin=c]{90}{Without $\mR_k$}} & Vanilla$\ssymbol{2}$ & -0.94 & 16.77 & -25.03 & 37.11 & 63.98 & 0.56 & 0.22 & 29.93 & 25.89 & 20.34 & 43.92 & 57.39 \\
    & GradDrop  & -0.10 & 15.71 & -26.99 & 37.51 & 63.62 & 0.59 & 0.23 & 30.15 & 26.33 & 19.32 & 43.15 & 56.59 \\
    & PCGrad & -0.51 & 19.97 & -24.63 & 38.51 & 63.95 & 0.55 & 0.22 & 29.79 & 25.77 & 20.61 & 44.22 & 57.64 \\
    & MGDA-UB & -32.19 & -8.22 & \maxf{1.50} & 20.75 & 51.44 & 0.73 & 0.28 & \maxf{24.70} & \maxf{18.92} & \maxf{30.57} & \maxf{57.95} & \maxf{69.99} \\
    & GradNorm & 2.18 & \maxf{20.60} & \maxf{-10.23} & 39.29 & \maxf{64.80} & \maxf{0.53} & \maxf{0.22} & \maxf{26.77} & \maxf{21.88} & \maxf{25.39} & \maxf{51.78} & \maxf{64.76} \\
    & IMTL-G & \maxf{1.92} & \maxf{21.35} & \maxf{-6.71} & \maxf{39.94} & \maxf{65.96} & \maxf{0.55} & \maxf{0.21} & \maxf{26.23} & \maxf{21.14} & \maxf{26.77} & \maxf{53.25} & \maxf{66.22} \\ \bottomrule
    \end{tabular}
	}
    \vspace{-10pt}
\end{table*}
}

\subsection{Subspace rotations} \label{sec:exps-subspace}

We now evaluate the effect of subspace rotations as described at the end of \Cref{sec:scaling}, assessing the trade-off between avoiding negative transfer and size of the subspace considered by \ours. 

\textbf{Experimental setup.} 
We test \ours on a 10-task classification problem on CIFAR10~\citep{krizhevsky2009cifar}, using binary cross-entropy and f1-score as loss and metric, respectively, for all tasks.
We use ResNet18~\citep{he2016resnet} {without pre-training} as backbone ($d = 512$), and linear layers with sigmoid activation functions as task-specific heads.

\textbf{Results.} 
\Cref{tab:cifar} (top) shows that rotating the entire space provides the best results, and that these worsen as we decrease the size of $\mR_k$.
Remarkably,
rotating only  {128 features} already outperforms vanilla with no extra per-task parameters (1st row); and rotating 
{256 features} %
already yields comparable results to vanilla optimization with extra capacity (6th row) despite its larger number of (task-specific)  parameters.
These results can be further explained by \Cref{fig:cos_sim_sizes}  in \cref{app:extra-results}, which shows a positive correlation between the size of $\mR_k$ and cosine similarity. %

\subsection{Methods comparison} \label{sec:comparisons}

{
\textbf{Experimental setup.}
In order to provide fair comparisons among methods, all experiments use identical configurations and random initializations. 
For all methods we performed a hyperparameter search and chose the best ones based on validation error. 
Unless otherwise specified, all baselines use \textit{the same architecture (and thus, number of parameters) as \ours}, taking each rotation matrix $\mR_k$ as extra task-specific parameters.
Further experimental details can be found in \cref{app:setups}, as well as extra experiments and complete results in \cref{app:extra-results}.
}

\textbf{NYUv2.} 
\Cref{tab:nyuv2} shows the performance of all baselines with and without the extra capacity. 
\ours significantly improves performance on all tasks  compared with vanilla optimization, and outperforms all other baselines.
{Remarkably, we rotate only \num{1024} dimensions of $\vz$ (out of a total of \num{7} millions) and, as a result, \ours stays on par in training time with the  baselines (around \SI{4}{\hour}, \cref{app:extra-results}).} 
We can also assert the importance of learning the matrices $\mR_k$ properly by  %
comparing in \cref{tab:nyuv2} the different baselines with and without extra capacity. 
\begin{wrapfigure}[14]{r}{.55\textwidth}
    \centering
    \includegraphics[keepaspectratio, width=0.55\textwidth]{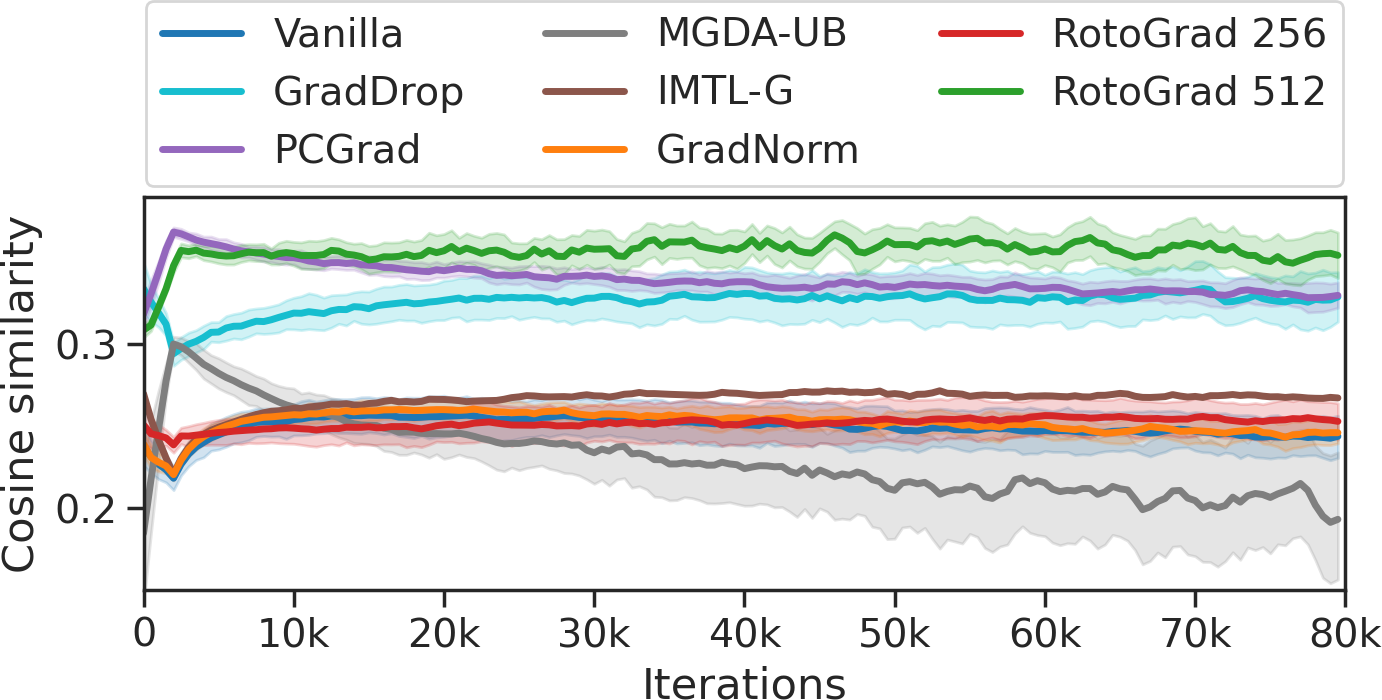}
    \vspace{-1.6em}
    \caption{Similarity between task and update gradients for different methods on CIFAR10, averaged over tasks and five runs.}
    \label{fig:cos_sim_comparisons}
\end{wrapfigure}
This comparison reveals that the extra parameters do not solve the negative transfer 
but instead amplifies biases (methods that overlook a subset of tasks, keep overlooking them) and, in the best case, provides trade-off solutions (also shown in \cref{app:extra-results}).
Note, moreover, that \ours (due to its \textit{Rotate Only} component) is the only method to tackle conflicting gradient directions that manages to not overlook the normal surfaces task.

\textbf{CIFAR10.}
We reuse the setting from \Cref{sec:exps-subspace} 
to compare different \MTL baselines in terms of relative improvements (\cref{tab:cifar}) and cosine similarity (\cref{fig:cos_sim_comparisons}), averaged over five different runs.
\Cref{tab:cifar} (bottom) shows that, similar to the NYUv2 results, both direction-aware solutions (PCGrad and GradDrop) behave similar to vanilla optimization, marginally increasing the average improvement.
Unlike previous experiments, all magnitude-aware methods substantially worsen (at least) one of the statistics.
In contrast, \ours improves the relative task improvement across all statistics using the same number of parameters. %
\Cref{fig:cos_sim_comparisons} shows the cosine similarity between task and update gradients, averaged over all tasks and runs (shaded areas correspond to \SI{90}{\percent} confidence intervals).
It is clear that \ours obtains the highest cosine similarity, that other direction-aware methods also effectively align task gradients and,
combined with the low cosine similarity achieved by MGDA-UB, suggests that \textit{there exists a correlation between cosine similarity and performance}.

{
\setlength\tabcolsep{2.5pt}
\sisetup{table-format=2.2(2), detect-all}
\begin{figure}
    \centering
    \begin{minipage}[]{.55\textwidth}
    \captionof{table}{(Top) Relative task improvement on CIFAR10 for \ours with matrices $\mR_k$ of different sizes; and (bottom) comparison with baseline methods including  rotation matrices as extra task-specific parameters. Table shows median and standard deviation over five runs.}
    \label{tab:cifar}
    \resizebox{\textwidth}{!}{
            \begin{tabular}{ll|SSS} \toprule
    \multicolumn{2}{l|}{Method~~~~~~~~~~~$d$} & {$\E[k]{\Delta_k}$~$\uparrow$} & {$\med_k\Delta_k$~$\uparrow$} & {$\max_k\Delta_k$~$\uparrow$}  \\ \midrule
    \multicolumn{2}{l|}{Vanilla$\ssymbol{2}$~~~~~~~~~~~0} & 2.58 (54) & 1.90 (53) & 11.14 (335) \\ %
    \multicolumn{2}{l|}{\ours\hfill ~64} & 2.90 (49) & 1.79 (57) & 13.16 (240)  \\
    \multicolumn{2}{l|}{\ours\hfill 128} & 2.97 (108) & 2.25 (107) & 12.64 (356) \\ 
    \multicolumn{2}{l|}{\ours\hfill 256} & 3.68 (68) & 2.16 (72) &  14.01 (322) \\
    \multicolumn{2}{l|}{\ours\hfill 512} & \maxf{4.48 (99)} & 3.67 (140) & \maxf{15.57 (399)} \\ \midrule  \midrule
    \multirow{7}{*}{\rotatebox[origin=c]{90}{\shortstack{With $\mR_k$\\$(m = 512)$}}} & Vanilla & 3.12 (79) & 3.10 (129) & 14.23 (286) \\ \cmidrule{2-5}
    & GradDrop & 3.54 (110) & 3.27 (161) & \maxf{13.88 (295)} \\
    & PCGrad & 3.29 (46) & 2.67 (88) & 13.44 (186) \\
    & MGDA-UB & 0.21 (67) & 0.57 (62) & 4.78 (215) \\
    & GradNorm & 3.21 (104) & 3.10 (101) & 10.88 (473) \\
    & IMTL-G & 3.02 (69) & 1.81 (87) & 12.76 (177) \\ %
    \bottomrule
    \end{tabular}
	}%
    \end{minipage}
    \hfill
    \begin{minipage}[]{.425\textwidth}
		\captionof{table}{F1-score statistics in CelebA for two neural network architectures. Median over five different runs. %
		}
		\label{tab:celeba}
		\resizebox{\textwidth}{!}{
			\setlength\tabcolsep{2.5pt}
			\sisetup{table-format=2.2,detect-weight=true}
			\begin{tabular}{ll|SSSS} \toprule
				& & \multicolumn{4}{c}{task f1-scores (\%)~$\uparrow$} \\
				& Method & {$\min_k$} & {$\med_k$} & {$\operatorname{avg}_k$} & {$\operatorname{std}_k$~$\downarrow$} \\ \midrule
				 \multirow{7}{*}{\rotatebox[origin=c]{90}{\shortstack{Conv. Net. with \\ {\footnotesize $\mR_k$ {($m=256$)}}}}} & Vanilla & 4.59 & 50.28 & 56.03 & 25.65 \\ \cmidrule{2-6}
				& GradDrop & 3.18 & 50.07 & 54.43 & 27.21 \\
				& PCGrad & 1.44 & 53.05 & 54.72 & 27.61 \\
				& GradNorm & 2.08 & 52.53 & 56.71 & 24.57 \\
				& IMTL-G & 0.00 & 37.00 & 42.24 & 33.46 \\
				& \ours & 4.59 & 55.02 & 57.20 & 24.75 \\ \midrule \midrule
				\multirow{7}{*}{\rotatebox[origin=c]{90}{\shortstack{ResNet18 with\\ {\footnotesize $\mR_k$ {($m = 1536$)}}}}} & Vanilla & 19.71 & 63.56 & 63.23 & 21.16 \\ \cmidrule{2-6}
				& GradDrop & 12.33 & 62.40 & 62.74 & 21.74 \\
				& PCGrad & 14.71 & 63.65 & 62.61 & 22.22 \\
				& GradNorm & 9.05 & 60.20 & 60.78 & 22.31 \\
				& IMTL-G & 17.11 & 61.68 & 60.72 & 22.80 \\
				& \ours & 9.96 & 63.84 & 62.81 & 21.80 \\
				\bottomrule
			\end{tabular}
		}
    \end{minipage}
\end{figure}
}

\textbf{CelebA.} 
To finalize, we test all methods in a 40-class multi-classification problem on CelebA~\citep{liu2015celeba} and two different settings: one using a convolutional network as backbone ($d = 512$); and another using ResNet18~\citep{he2016resnet} as backbone ($d=2048$).
As above, we use binary cross-entropy and f1-score as loss and metric for all tasks, thus accounting for highly imbalanced attributes.
Results in \Tabref{tab:celeba} show that \ours performs great in all f1-score statistics and  both architectures, specially in the convolutional neural network, outperforming competing methods. 

Moreover, \ours achieves these results rotating \SI{50}{\percent} of the shared feature $\vz$ for the convolutional network, and \SI{75}{\percent} for the residual network, which further demonstrates that \ours can scale-up to real-world settings.
We believe it is important to remark that, due to the high number of tasks, this setup is specially demanding. 
Results in \cref{app:extra-results} show the performance of all baselines without the rotation matrices, demonstrating the negative effect that the extra capacity can have if not learned properly, as well as that \ours stays on par with non-extended baselines in training time. %
\vspace{-2pt}
\section{Conclusions}
\vspace{-3pt}
In this work, we have introduced \ours, an algorithm that tackles negative transfer in \MTL by homogenizing task gradients in terms of both magnitudes and directions.
\ours enforces a similar convergence rate for all tasks, while at the same time smoothly rotates the shared representation differently for each task in order to avoid conflicting gradients. As a result, \ours leads to stable and accurate \MTL.  
Our empirical results have shown the effectiveness of \ours in many scenarios, staying on top of all competing methods in performance, while being on par in terms of computational complexity with those that better scale to complex networks. %

We believe our work opens up interesting venues for future work.
For example, it would be interesting to study alternative approaches to further scale up \ours using, \eg, diagonal-block or sparse rotation matrices; to rotate the feature space in application domains with structured features (e.g., channel-wise rotations in images); and to combine different methods, \eg, by scaling gradients using the direction-awareness of IMTL-G and the ``favor slow-learners'' policy of \ours.
    \bibliography{references}
    \bibliographystyle{iclr2022_conference}

    \newpage
    \appendix
    \begin{appendices}

        \addtocontents{toc}{\protect\setcounter{tocdepth}{1}}
        \makeatletter
        \addtocontents{toc}{%
          \begingroup
          \let\protect\l@chapter\protect\l@section
          \let\protect\l@section\protect\l@subsection
        }
        \makeatother

\section{Proofs} \label{app:proofs}

\begin{proposition}
Suppose $f_k \coloneqq \loss_k \circ \head_k$ is an infinitely differentiable real-valued function, and let us call $\mG_k = \nabla_\mZ f_k(\mZ)$ its derivative with respect to $\mZ$, for every $k = \range{K}$. If $\operatorname{cos\_sim}(\mG_i, \mG_j) > -1/(K-1)$ pairwise; then there exists a small-enough step size $\epsilon > 0$ such that, for all $k$, we have that $\loss_k(\head_k(\mZ - \epsilon \cdot C \sum_k \mU_k; \phib_k); \mY_k) < \loss_k(\head_k(\mZ; \phib_k); \mY_k)$, where $\mU_k \coloneqq \mG_k / \norm{\mG_k}$ and $C \geq 0$.
\end{proposition}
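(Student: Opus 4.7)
The plan is to use a first-order Taylor expansion of each $f_k$ around $\mZ$ and show that the resulting directional derivative along the update direction is strictly positive for each task, which will let smoothness do the rest. Writing $\mV \coloneqq C \sum_j \mU_j$, the Taylor expansion of $f_k$ gives
\begin{equation*}
    f_k(\mZ - \epsilon \mV) = f_k(\mZ) - \epsilon \,\langle \mG_k, \mV \rangle + O(\epsilon^2),
\end{equation*}
so it suffices to prove $\langle \mG_k, \mV \rangle > 0$ for every $k$, and then pick $\epsilon$ small enough that the first-order term dominates the remainder for all $K$ tasks simultaneously (taking a minimum over $k$ of the individual thresholds $\epsilon_k$). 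The case $C=0$ is trivial (no movement), so I will assume $C > 0$, and I will also assume $\mG_k \neq 0$ for all $k$ (otherwise the task is at a critical point and $f_k$ is unchanged to first order; the cosine similarity hypothesis implicitly assumes nonzero gradients anyway).

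First I would factor out $\norm{\mG_k}$ to write
\begin{equation*}
    \langle \mG_k, \mV \rangle = C\,\norm{\mG_k}\,\langle \mU_k, \textstyle\sum_j \mU_j \rangle = C\,\norm{\mG_k}\,\Bigl(1 + \textstyle\sum_{j\neq k} \operatorname{cos\_sim}(\mG_k, \mG_j)\Bigr),
\end{equation*}
using that each $\mU_j$ is unit-norm so $\langle \mU_k, \mU_j \rangle = \operatorname{cos\_sim}(\mG_k, \mG_j)$ and $\langle \mU_k, \mU_k\rangle = 1$. The key algebraic step is then immediate: by the hypothesis each of the $K-1$ cross terms satisfies $\operatorname{cos\_sim}(\mG_k, \mG_j) > -1/(K-1)$, so
\begin{equation*}
    1 + \sum_{j\neq k}\operatorname{cos\_sim}(\mG_k,\mG_j) > 1 - (K-1)\cdot\tfrac{1}{K-1} = 0,
\end{equation*}
which yields $\langle \mG_k, \mV \rangle > 0$ strictly for every $k$.

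To finish, I would invoke smoothness of $f_k$ (which holds since $f_k$ is infinitely differentiable, so locally Lipschitz-smooth near $\mZ$) to bound the remainder: there exist constants $L_k$ and a neighborhood of $\mZ$ on which $|f_k(\mZ - \epsilon \mV) - f_k(\mZ) + \epsilon \langle \mG_k, \mV\rangle| \leq \tfrac{L_k}{2}\epsilon^2\norm{\mV}^2$. Thus $f_k(\mZ - \epsilon\mV) - f_k(\mZ) \leq -\epsilon\langle \mG_k, \mV\rangle + \tfrac{L_k}{2}\epsilon^2\norm{\mV}^2$, which is strictly negative whenever $\epsilon < \epsilon_k \coloneqq 2\langle \mG_k, \mV\rangle / (L_k\norm{\mV}^2)$. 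Choosing $\epsilon = \min_k \epsilon_k > 0$ (a finite minimum since $K$ is finite and each $\epsilon_k > 0$) and composing with $\loss_k$ via the chain rule gives the desired strict decrease on every task simultaneously.

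The main obstacle is really only a bookkeeping one: translating between $f_k = \loss_k\circ\head_k$ and $\loss_k(\head_k(\cdot;\phib_k);\mY_k)$ in the statement, and making sure the Lipschitz-smoothness constants used for the remainder bound exist uniformly on a neighborhood where all $K$ inequalities simultaneously hold. Infinite differentiability on the (open) domain guarantees this locally, so the $\min$ over finitely many tasks produces a valid global threshold $\epsilon$.
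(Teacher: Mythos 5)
Your proof is correct and follows essentially the same route as the paper's: a first-order Taylor expansion, factoring $\langle \mG_k, \mV\rangle = C\norm{\mG_k}\bigl(1 + \sum_{j\neq k}\operatorname{cos\_sim}(\mG_k,\mG_j)\bigr)$, and bounding the sum below by $-(K-1)\cdot\tfrac{1}{K-1}$ to conclude the directional derivative is strictly positive for every task. Your treatment is in fact slightly more careful than the paper's, which leaves the remainder as $o(\epsilon)$ and does not explicitly take the minimum over the per-task thresholds $\epsilon_k$ or address the $C=0$ and $\mG_k=0$ edge cases.
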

\begin{proof}
Since $f_k$ is infinitely differentiable, we can take the first-order Taylor expansion of $f_k$ around $\mZ$, for any $k$, evaluated at $\mZ - \epsilon \mV$ for a given vector $\mV$:
\begin{equation}
    f_k(\mZ - \epsilon \mV) = f_k(\mZ) - \epsilon \langle \mG_k, \mV \rangle + o(\epsilon).
\end{equation}

In our case, $\mV = C \sum_k \mU_k$ with $C\geq 0$, then:
\begin{align}
    f_k(\mZ - \epsilon \mV) - f_k(\mZ) &= -\epsilon \cdot C \norm{\mG_k} \sum_i \langle \mU_k, \mU_i \rangle + o(\epsilon) \\ 
    & = -\epsilon \cdot C \norm{\mG_k} \left( 1 + \sum_{i\neq j} \langle \mU_k, \mU_i \rangle \right) + o(\epsilon) .
\end{align}

Since $\norm{\mU_k} = 1$ for all $k=\range{K}$, it holds that $-1 \leq \operatorname{cos\_sim}(\mU_k, \mU_i) = \langle \mU_k, \mU_i \rangle \leq 1$.

If $\operatorname{cos\_sim}(\mG_k, \mG_i) > -1/(K-1)$ for all $i\neq k$, then we have that $1 + \sum_{i\neq j} \langle \mU_k, \mU_i \rangle > 0$ and $f_k(\mZ - \epsilon \mV) < f_k(\mV)$ for a small enough $\epsilon > 0$.

\end{proof}

\section{Stackelberg games and \ours} \label{app:stackelberg}

In game theory, a Stackelberg game~\citep{fiez2019convergence} is an \textit{asymmetric game} where two players play alternately.
One of the players---whose objective is to blindly minimize their loss function---is known as the follower, $\follower$.
The other player is known as the leader, $\leader$. In contrast to the follower, the leader attempts to minimize their own loss function, but it does so with the advantage of knowing which will be the best response to their move by the follower. The problem can be written as
\begin{equation}
\begin{aligned}
\mathcal{L}\text{{\small eader}:} & \min_{x_l\in X_l} \{ \leader(x_l, x_f)\,|\,x_f \in \argmin_{y\in X_f} \follower(x_l, y) \}, \\
\mathcal{F}\text{{\small ollower}:} & \min_{x_f\in X_f} \follower(x_l, x_f),  \label{eq:stackelberg}
\end{aligned}
\end{equation}%
where $x_l \in X_l$ and $x_f \in X_f$ are the actions taken by the leader and follower, respectively.

While traditionally one assumes that players make perfect alternate moves in each step of problem~\ref{eq:stackelberg}, \textit{gradient-play Stackelberg games} assume instead that players perform simultaneous gradient updates,
\begin{equation}
\begin{aligned}
x_l^{t+1} &= x_l^t - \alpha_l^t\,\nabla_{x_l}\leader(x_l, x_f), \\
x_f^{t+1} &= x_f^t - \alpha_f^t\,\nabla_{x_f}\leader(x_l, x_f), \\ \label{eq:stackelberg-update-rule}
\end{aligned}
\end{equation}%
where $\alpha_l$ and $\alpha_f$ are the learning rates of the leader and follower, respectively.

An important concept in game theory is that of an equilibrium point, \ie, a point in which both players are satisfied with their situation, meaning that there is no available move immediately improving any of the players' scores, so that none of the players is willing to perform additional actions/updates.
Specifically, we focus on the following definition introduced by \citet{fiez2019convergence}:

\begin{definition}[differential Stackelberg equilibrium]
	A pair of points $x_l^*\in X_l$, $x_f^*\in X_f$, where $x_f^* = r(x_l^*)$ is implicitly defined by $\nabla_{x_f} \follower(x_l^*, x_f^*) = 0$, is a differential Stackelberg equilibrium point if $\nabla_{x_l} \leader(x_l^*, r(x_l^*)) = 0$, and $\nabla^2_{x_l} \leader(x_l^*, r(x_l^*))$ is positive definite.
\end{definition}

Note that, when the players manage to reach such an equilibrium point, both of them are in a local optimum. 
Here, we make use of the following result, introduced by \citet{fiez2019convergence}, to provide theoretical convergence guarantees to an equilibrium point:

\begin{proposition} \label{prop:equilibrium}
	In the given setting, if the leader's learning rate goes to zero at a faster rate than the follower's, that is, $\alpha_l(t) = o(\alpha_f(t))$, where $\alpha_i(t)$ denotes the learning rate of player $i$ at step $t$,
	then they will asymptotically converge to a differential Stackelberg equilibrium point almost surely.
\end{proposition}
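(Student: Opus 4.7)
The plan is to reduce this proposition to the classical two-timescale stochastic approximation framework of Borkar, under which the statement becomes a standard application of the ODE method. The gradient updates in \eqref{eq:stackelberg-update-rule} have exactly this two-timescale form, and the condition $\alpha_l(t) = o(\alpha_f(t))$ is precisely the defining requirement that places the leader on the slow timescale and the follower on the fast one. Before starting I would check the standard side conditions: $\sum_t \alpha_i(t) = \infty$ and $\sum_t \alpha_i(t)^2 < \infty$ for $i \in \{l,f\}$; local Lipschitz-continuity of $\nabla_{x_l}\leader$ and $\nabla_{x_f}\follower$; and stability of the iterates (either as an a-priori boundedness assumption or by projecting onto a compact neighborhood of $(x_l^*, x_f^*)$).

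First I would analyze the fast timescale. Rescaling time by $\alpha_f(t)$ and treating $x_l$ as quasi-static gives the fast ODE $\dot{x}_f = -\nabla_{x_f}\follower(x_l, x_f)$. The differential Stackelberg equilibrium condition ensures $\nabla_{x_f}\follower(x_l^*, x_f^*) = 0$, and implicit-function arguments (leveraging positive-definiteness of $\nabla^2_{x_f}\follower$ at the equilibrium, which is needed for the best-response to be isolated) produce a locally Lipschitz best-response map $r(\cdot)$ with $r(x_l^*) = x_f^*$, such that for each $x_l$ in a neighborhood of $x_l^*$ the point $r(x_l)$ is a locally asymptotically stable equilibrium of the fast ODE. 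Borkar's two-timescale theorem then yields $\lVert x_f^t - r(x_l^t) \rVert \to 0$ almost surely. Next I would analyze the slow timescale: substituting $x_f^t \approx r(x_l^t)$ into the leader update gives the effective slow ODE $\dot{x}_l = -\nabla_{x_l}\leader(x_l, r(x_l))$, and the differential Stackelberg definition states exactly that $x_l^*$ is a critical point of $\leader(\cdot, r(\cdot))$ with positive-definite Hessian, hence locally asymptotically stable. A second invocation of the ODE method delivers $x_l^t \to x_l^*$ almost surely, and by continuity of $r$, $x_f^t \to r(x_l^*) = x_f^*$.

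The main obstacle is the technical bookkeeping for Borkar's theorem: showing any stochastic noise terms form martingale differences with bounded conditional second moments, and ensuring the iterates remain in the basin of attraction of $(x_l^*, x_f^*)$. In the deterministic gradient-play setting the noise issue vanishes, leaving only the basin-of-attraction requirement, which is standardly handled by initializing close enough to the equilibrium or by compact-set projection. The definition of differential Stackelberg equilibrium has been crafted precisely so that both the fast and slow ODEs admit $x_f^*$ and $x_l^*$ as locally asymptotically stable equilibria, so all the heavy lifting is done up front by that definition, and what remains is the standard two-timescale machinery.
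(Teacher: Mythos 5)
The paper does not prove this proposition at all: it is imported verbatim from \citet{fiez2019convergence}, so there is no in-paper argument to compare against. Your two-timescale sketch (Borkar's ODE method, fast follower ODE converging to a best-response map $r$ via the implicit function theorem, slow leader ODE on $\leader(\cdot, r(\cdot))$ with the differential Stackelberg conditions supplying local asymptotic stability) is essentially the strategy used in that cited reference, and you correctly flag the side conditions it needs --- step-size summability, Lipschitz gradients, martingale-difference noise with bounded second moments, and iterate stability --- as well as the fact that positive definiteness of $\nabla^2_{x_f}\follower$ at the equilibrium (implicit in the full definition of a differential Stackelberg equilibrium, though not written out in the paper's abridged version) is what makes $r$ well defined and the fast ODE locally stable.
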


In other words, as long as the follower learns faster than the leader, they will end up in a situation where both are satisfied.
Even more, \citet{fiez2019convergence} extended this result to the finite-time case, showing that the game will end close to an equilibrium point with high probability. 

As we can observe, the Stackelberg formulation in \Cref{eq:stackelberg} is really similar to \ours's formulation in \Cref{eq:stackelberg-rotograd}. Even more, the update rule in \Cref{eq:stackelberg-update-rule} is quite similar to that one shown in \Cref{alg:rotograd}.
Therefore, it is sensible to cast \ours as a Stackelberg game.
One important but subtle bit about this link regards the extra information used by the leader. 
In our case, this extra knowledge explicitly appears in \Eqref{eq:roto-loss} in the form of the follower's gradient, $\vg_{i,k}$, which is the direction the follower will follow and, as it is performing first-order optimization by assumption, this gradient directly encodes the follower's response.

Thanks to the Stackelberg formulation in \Eqref{eq:stackelberg-rotograd} we can make use of Prop.~\ref{prop:equilibrium} and, thus, draw theoretical guarantees on the training stability and convergence.
In other words, we can say that performing training steps as those described in \Cref{alg:rotograd} will stably converge as long as the leader is asymptotically the slow learner, \ie $\alpha_l^t = o(\alpha_f^t)$, where $o$ denotes the little-o notation.

In practice, however, the optimization procedure proposed by \citet{fiez2019convergence} requires computing the gradient of a gradient, thus incurring a significant overhead.
Instead, we use Gradient Ascent-Descent (GDA), which only computes partial derivatives and enjoys similar guarantees \citep{jordan-gda}, as we empirically showed in the manuscript.

\section{Experiments} \label{app:experiments}

\subsection{Experimental setups} \label{app:setups}

Here, we discuss common settings across all experiments. Refer to specific sections further below for details concerning single experiments.

\textbf{Computational resources.} All experiments were performed on a shared cluster system with two NVIDIA DGX-A100. Therefore, all experiments were run with (up to) 4 cores of AMD EPYC 7742 CPUs and, for those trained on GPU (CIFAR10, CelebA, and NYUv2), a single NVIDIA A100 GPU. All experiments were restricted to \SI{12}{\giga\byte} of RAM.

\textbf{Loss normalization.} Similar as in the gradient case studied in this work, magnitude differences between losses can make the model overlook some tasks. 
To overcome this issue, here we perform loss normalization, that is, we normalize all losses by their value at the first training iteration (so that they are all 1 in the first iteration). To account for some losses that may quickly decrease at the start, after the 20th iteration, we instead normalize losses dividing by their value at that iteration.

\textbf{Checkpoints.} For the single training of a model, we select the parameters of the model by taking those that obtained the best validation error after each training epoch. That is, after each epoch we evaluate the linearly-scalarized validation loss, $\sum_k \loss_k$, and use the parameters that obtained the best error during training. This can be seen as an extension of early-stopping where, instead of stopping, we keep training until reaching the maximum number of epochs hoping to keep improving.

\textbf{Baselines.} We have implemented all baselines according to the original paper descriptions, except for PCGrad, which we apply to the gradients with respect to the feature $\rvz$ (instead of the shared parameters $\thetab$, as in the original paper). 
Note that this is in accordance with recent works, for example \citet{graddrop} and \citet{liu2021imtl}, which also use this implementation of PCGrad in the feature space.
This way, all competing methods modify gradients with respect to the same variables and, as backpropagation performs the sum of gradients at the last shared representation $\vz$, PCGrad can be applied to reduce conflict at that level.

\textbf{Hyperparameter tuning.} Model-specific hyperparameters were mostly selected by a combination of: i)~using values described in prior works; and ii)~empirical validation on the vanilla case (without any gradient-modifiers) to verify that the combinations of hyperparameters work.
To select method-specific hyperparameters we performed a grid search, choosing those combinations of values that performed the best with respect to validation error.

Specifically, we took $\alpha\in\{0,0.5,1,2\}$ and $\mR_k \in \sR^{m\times m}$ with $m\in\{0.25d, 0.5d, 0.75d, d\}$ (restricting ourselves to $m \leq 1500$) for \ours. Regarding the learning rate of \ours (GradNorm), we performed a grid search considering $\eta_\Roto \in \{0.05\eta, 0.1\eta, 0.5\eta, \eta, 2\eta\}$, where $\eta_\Roto$ and $\eta$ are the learning rates of \ours (GradNorm) and the network, respectively.

\textbf{Statistical test.} For the tabular data, we highlight those results that are significantly better than those from the multitask baseline (that is, better than the vanilla MTL optimization without the $R_k$ matrices). To find these values, we run a paired one-sided Student's t-test across each method and the baseline. For those metrics for which higher is better, our null hypothesis is that the method's performance is equal or lower than the baseline, and for those for which lower is better, the null hypothesis is that the method's performance is equal or greater than the baseline. We use a significance level of $\alpha = 0.05$.

\textbf{Notation.} Along this section, we use the following to describe different architectures: \conv{F}{C} denotes a convolutional layer with filter size $F$ and $C$ number of channels; \maxl denotes a max-pool layer of filter size and stride 2, and \linear{H} a dense layer with output size $H$.

\subsubsection{Illustrative examples}

\textbf{Losses and metrics.} Both illustrative experiments use MSE as both loss and metric. Regarding the specific form of $\varphi$ in \Cref{eq:illustrative}, the avocado-shaped experiments uses
\begin{equation}
    \varphi((x, y), s) = (x - s)^2 + 25y^2,
\end{equation}
while the non-convex second experiment uses
\begin{equation}
    \varphi((x,y), s) = -\frac{\sin(3x + 4.5s)}{x + 1.5s} - \frac{\sin(3y + 4.5s)}{y + 1.5s} + \abs{x + 1.5s} + \abs{y + 1.5s}
\end{equation}

\textbf{Model.} As described in the main manuscript, we use a single input $\vx\in\sR^2$ picked at random from a standard normal distribution, and drop all task-specific network parameters (that is, $\head_k(\vr_k) = \vr_k$).
As backbone, we take a simple network of the form $\vz = \mW_2 \max(\mW_1\vx + \vb_1, 0) + \vb_2$ with $\vb_1\in\sR^{10}, \vb_2\in\sR^2$, and $\mW_1, \mW_2^\top \in \sR^{10\times2}$.

\textbf{Hyperparameters, convex-case.} We train the model for one hundred epochs. As network optimizer we use stochastic gradient descent (SGD) with a learning rate of \num{0.01}. For the rotations we use RAdam~\citep{liu2019radam} with a learning rate of \num{0.5} (for visualization purposes we need a high learning rate, in such a simple scenario it still converges) and exponential decay with decaying factor \num{0.99999}.

\textbf{Hyperparameter, non-convex case.} For the second experiment, we train the model for 400 epochs and, once again, use SGD as the network optimizer with a learning rate of \num{0.015}. For the rotations, we use RAdam~\citep{liu2019radam} with a learning rate of \num{0.1} and an exponential decay of \num{0.99999}.

\subsubsection{MNIST/SVHN} \label{subsec:exp-setup-mnist}

\textbf{Datasets.} We use two modified versions of MNIST~\citep{lecun2010mnist} and SVHN~\citep{netzer2011reading} in which each image has two digits, one on each side of the image. In the case of MNIST, both of them are merged such that they form an overlapped image of $28\times28$, as shown in \Cref{fig:mnist-sample}. Since SVHN contains backgrounds, we simply paste two images together without overlapping, obtaining images of size $32\times64$, as shown in \Cref{fig:svhn-sample}. Moreover, we transform all SVHN samples to grayscale.

\begin{figure}[htbp]
    \centering
    \subcaptionbox{Multi-MNIST.\label{fig:mnist-sample}}{\includegraphics[keepaspectratio, width=.33\textwidth]{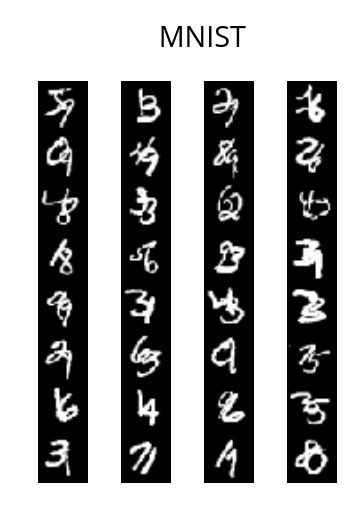}} %
    \subcaptionbox{Multi-SVHN.\label{fig:svhn-sample}}{\includegraphics[keepaspectratio, width=.49\textwidth]{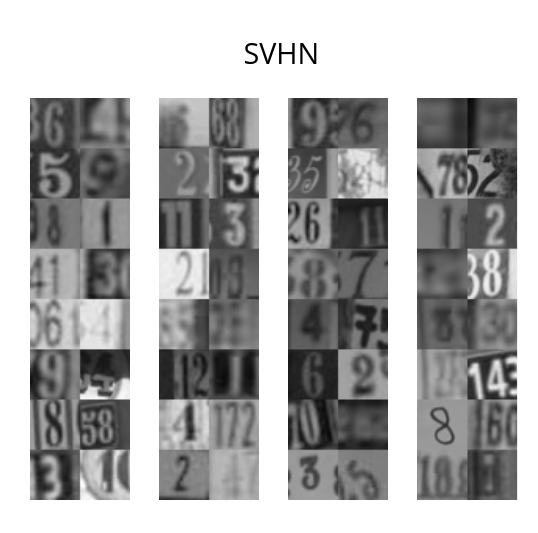}} %
    \caption{Samples extracted from the modified MNIST and SVHN datasets.}
    \label{fig:datasets}
\end{figure}

\textbf{Tasks, losses, and metrics.} In order to further clarify the setup used, here we describe in detail each task. Specifically, we have:
\begin{itemize}
    \item \textbf{Left digit} classification. Loss: negative log-likelihood (NLL). Metric: accuracy (ACC).
    \item \textbf{Right digit} classification. Loss: NLL. Metric: ACC.
    \item \textbf{Parity} of the product of digits, \ie, whether the product of both digits gives an odd number (binary-classification). Loss: binary cross entropy (BCE). Metric: f1-score (F1).
    \item \textbf{Sum} of both digits (regression). Loss: MSE. Metric: MSE.
    \item \textbf{Active pixels} in the image, \ie, predict the number of pixels with values higher than \num{0.5}, where we use pixels lying in the unit interval (regression). Loss: MSE. Metric: MSE.
\end{itemize}

\textbf{Model.} 
Our backbone is an adaption from the original LeNet~\citep{lecun1998gradient} model. Specifically, we use:
\begin{itemize}
    \item \textbf{MNIST.} \conv{5}{10}\maxl\relu\conv{5}{20}\maxl\linear{50}\relu\batchnorm ,
    \item \textbf{SVHN.} \conv{5}{10}\maxl\relu\conv{5}{20}\maxl\conv{5}{20}\linear{50}\\*\relu\batchnorm ,
\end{itemize}
where \batchnorm refers to Batch Normalization~\citep{ioffe2015batch}. 
Depending on the type of task, we use a different head. Specifically, we use:
\begin{itemize}
    \item \textbf{Regression.} \linear{50}\relu\linear{1},
    \item \textbf{Classification.} \linear{50}\relu\linear{10}\logsoftmax,
    \item \textbf{Binary-classification.} \linear{1}\sigmoidl.
\end{itemize}

\textbf{Model hyperparameters.} For both datasets, we train the model for \num{300} epochs using a batch size of \num{1024}. For the network parameters, we use RAdam~\citep{liu2019radam} with a learning rate of \num{1e-3}.

\textbf{Methods hyperparameters.} In \Cref{tab:mnist-svhn,tab:mnist-svhn-full} we show the results of GradNorm with: i)~MNIST with $R_k$, $\alpha = 0$; ii)~MNIST without $R_k$, $\alpha = 0.5$; iii)~SVHN with $R_k$, $\alpha = 1$; and iv)~SVHN without $R_k$, $\alpha = 2$. We train \ours with full-size rotation matrices ($m = d$). Both methods use RAdam with learning rate \num{5e-4} and exponential decay of \num{0.9999}.

\subsubsection{CIFAR10}

\textbf{Dataset.} We use CIFAR10~\citep{krizhevsky2009cifar} as dataset, with \num{40000} instances as training data and the rest as testing data. Additionally, every time we get a sample from the dataset we: i)~crop the image by a randomly selected square of size $3\times32\times32$; ii)~randomly flip the image horizontally; and iii)~standardize the image channel-wise using the mean and standard deviation estimators obtained on the training data.

\textbf{Model.} We take as backbone ResNet18~\citep{he2016resnet} without pre-training, where we remove the last linear and pool layers. In addition, we add a Batch Normalization layer. For each task-specific head, we simply use a linear layer followed by a sigmoid function, that is, \linear{1}\sigmoidl.

\textbf{Losses and metrics.} We treat each class (out of ten) as a binary-classification task where we use BCE and F1 as loss and metric, respectively.

\textbf{Model hyperparameters.} We use a batch size of \num{128} and train the model for \num{500} epochs. For the network parameters, we use as optimizer SGD with learning rate of \num{0.01}, Nesterov momentum of \num{0.9}, and a weight decay of \num{5e-4}. Additionally, we use for the network parameters a cosine learning-rate scheduler with a period of \num{200} iterations. 

\textbf{Methods hyperparameters.} Results shown in \Cref{tab:cifar,tab:cifar-full} use $\alpha=0$ and $\alpha = 0.5$ for GradNorm with and without $R_k$, respectively, and we use RAdam~\citep{liu2019radam} as optimizer with learning rate \num{0.001} and an exponential decay factor of \num{0.99995} for both GradNorm and \ours.

\subsubsection{NYUv2}

\textbf{Setup.}
In contrast with the rest of experiments, for the NYUv2 experiments shown in \Cref{sec:experiments}, instead of writing our own implementation, we slightly modified the open-source code provided by \citet{DBLP:mtan} at \url{https://github.com/lorenmt/mtan} (commit 268c5c1).
We therefore use the exact same setting as \citet{DBLP:mtan}---and refer to their paper and code for further details, with the addition of using data augmentation for the experiments which, although not described in the paper, is included in the repository as a command-line argument.
We will provide along this work a diff file to include all gradient-modifier methods into the aforementioned code.

\textbf{Methods hyperparameters.} For the results shown in \Cref{tab:nyuv2} we use GradNorm with $\alpha=0$ and \ours with rotations $\mR_k$ of size \num{1024}. We use a similar optimization strategy as the rest of parameters, using Adam~\citep{kingma2014adam} with learning rate \num{5e-5} (half the one of the network parameters) and where we halve this learning rate every 100 iterations.

\subsubsection{CelebA}

\textbf{Dataset.} We use CelebA~\citep{liu2015celeba} as dataset with usual splits. We resize each sample image so that they have size $3\times64\times64$.

\textbf{Losses and metrics.} We treat each class (out of forty) as a binary-classification task where we use BCE and F1 as loss and metric, respectively.

\textbf{ResNet model.} As with CIFAR10, we use as backbone ResNet18~\citep{he2016resnet} without pre-training, where we remove the last linear and pool layers. In addition, we add a Batch Normalization layer. For each task-specific head, we use a linear layer followed by a sigmoid function, that is, \linear{1}\sigmoidl.

\textbf{ResNet hyperparameters.} We use a batch size of \num{256} and train the model for \num{100} epochs. For the network parameters, we use RAdam~\citep{liu2019radam} as optimizer with learning rate \num{0.001} and exponential decay of \num{0.99995} applied every \num{2400} iterations.

\textbf{Convolutional model.} For the second architecture, we use a convolutional network as backbone, \conv{3}{64}\batchnorm\maxl\conv{3}{128}\batchnorm\conv{3}{128}\batchnorm\maxl\conv{3}{256}\batchnorm\\*\conv{3}{256}\batchnorm\maxl\conv{3}{512}\batchnorm\linear{512}\batchnorm. For the task-specific heads, we take a simple network of the form \linear{128}\batchnorm\linear{1}\sigmoidl.

\textbf{Convolutional hyperparameters.} We use a batch size of \num{8} and train the model for \num{20} epochs. For the network parameters, we use RAdam~\citep{liu2019radam} as optimizer with learning rate \num{0.001} and exponential decay of \num{0.96} applied every \num{2400} iterations.

\textbf{Methods hyperparameters.} Results shown in \Cref{tab:celeba,tab:celeba-full} use GradNorm with: i)~convolutional network with $R_k$, $\alpha=0$; ii)~convolutional network without $R_k$, $\alpha=1$; iii)~residual network with $R_k$, $\alpha=0.5$; and iv)~residual network without $R_k$, $\alpha=1$. For \ours, we rotate \num{256} and \num{1536} elements of $\vz$ for the convolutional and residual networks. As optimizer, we use RAdam~\citep{liu2019radam} with learning rate \num{5e-6} and an exponential decay factor of \num{0.99995} for both GradNorm and \ours. 

Note that for these experiments we omit MGDA-UB~\citep{sener2018multi} as it is computationally prohibitive in comparisons with other methods. In single-seed experiments, we however observed that it does not perform too well (specially in the convolutional network).

\subsection{Additional results} \label{app:extra-results}

\subsubsection{Multi-MNIST and multi-SVHN}

\begin{table}[h] %
    \centering
    \caption{Test performance (median and standard deviation) on two set of unrelated tasks on MNIST and SVHN, across ten different runs.}
    \label{tab:mnist-svhn}
    \setlength\tabcolsep{2.5pt}
    \sisetup{table-format=2.2(2)}
    \begin{tabular}{l|S[table-format=4.2(3)]SSS} \toprule
    & \multicolumn{2}{c}{MNIST} & \multicolumn{2}{c}{SVHN} \\ 
    & {Digits} & {Act Pix} & {Digits} & {Act Pix}  \\ 
    Method & {$\E[k]{\Delta_k}$~$\uparrow$} & {MSE~$\downarrow$} & {$\E[k]{\Delta_k}$~$\uparrow$} & {MSE~$\downarrow$} \\ 
    \midrule
    Single  & \maxf{0.00 (0)} & \maxf{0.01 (1)} & 0.00 (0) & \maxf{0.17 (6)} \\
    Vanilla & -1.43 (324) & 0.14 (5) & 4.78 (88) & 3.04 (253) \\
    \midrule
    GradDrop & -1.30 (182) & 0.16 (4) & \maxf{5.34 (92)} & 2.99 (259) \\
    PCGrad & -1.22 (281) & 0.13 (1) & 5.01 (65) & 2.70 (225) \\
    MGDA-UB & -29.14 (923) & \maxf{0.06 (0)} & -4.36 (672) & \maxf{1.00 (57)} \\
    GradNorm & \maxf{0.86 (193)} & 0.09 (4) & \maxf{5.24 (89)} & 4.12 (946) \\
    IMTL-G & \maxf{2.12 (146)} & \maxf{0.07 (2)} & \maxf{5.94 (99)} & 1.70 (105) \\
    \ours & \maxf{1.55 (222)} & \maxf{0.08 (3)} & \maxf{6.08 (48)} & 1.61 (272) \\ \bottomrule
    \end{tabular}
\end{table}
We reuse the experimental setting from \Cref{sec:stability}---now using the original LeNet~\citep{lecun1998gradient} and a multitask-version of SVHN~\citep{netzer2011reading}---in order to evaluate how disruptive the orthogonal image-related task is for different methods. 
We can observe (\cref{tab:mnist-svhn}) that %
the effect of the  image-related task is more disruptive in MNIST, in which MGDA-UB utterly fails. 
Direction-aware methods (GradDrop and PCGrad) do not improve the vanilla results, whereas IMTL-G, GradNorm, and \ours obtain the best results. %
We also provide the complete results for all metrics in \Cref{tab:mnist-svhn-full}.
In the case of MNIST, we can observe that both regression tasks tend to be quite disruptive. GradNorm, IMTL-G, and \ours manage to improve over all tasks while maintaining good performance on the rest of tasks. MGDA-UB, however, focuses on the image-related task too much and overlooks other tasks.
In SVHN we observe a similar behavior. This time, all methods are able to leverage positive transfer and improve their results on the parity and sum tasks, obtaining similar task improvement results. 
Yet, the image-related task is more disruptive than before, showing bigger differences between methods. As before, MGDA-UB completely focuses on this task, but now is able to not overlook any task while doing so. Regarding the rest of the methods, all of them improved their results with respect to the vanilla case, with \ours and GradNorm obtaining the second-best results.

{
\sisetup{table-format=2.2(2), detect-all}
\begin{table}[hbtp]
    \centering
    \caption{Complete results (median and standard deviation) of different competing methods on MNIST/SVHN on all tasks, see~\Cref{subsec:exp-setup-mnist} and~\Cref{app:extra-results}.}
    \label{tab:mnist-svhn-full}
    \resizebox{\textwidth}{!}{
    \begin{tabular}{ccl|SSSSS|S} \toprule
         & & & {Left digit} & {Right digit} & {Product parity} & {Sum digits} & & {Act. Pix.} \\
         & & Method & {Acc.~$\uparrow$} & {Acc.~$\uparrow$} & {f1~$\uparrow$} & {MSE~$\downarrow$} & {$\E[k]{\Delta_k}$~$\uparrow$} & {MSE~$\downarrow$} \\
          \midrule
        \multirow{15}{*}{\rotatebox[origin=c]{90}{MNIST}} & \multirow{7}{*}{\rotatebox[origin=c]{90}{Without $\mR_k$}} & Single & \maxf{95.70 (20)} & \maxf{94.05 (16)} & 92.09 (76) & \maxf{1.90 (10)} & \maxf{0.00 (0)} & \maxf{0.01 (1)} \\ \cmidrule{3-9}
        & & Vanilla$\ssymbol{2}$ & 94.94 (20) & 93.26 (27) & 93.07 (48) & 2.10 (17) & -3.26 (312) & 0.11 (1) \\
        & & GradDrop & \maxf{95.33 (39)} & \maxf{93.55 (29)} & 93.32 (54) & 2.14 (7) & -2.52 (163) & 0.13 (2) \\
        & & PCGrad & 95.07 (39) & 93.28 (18) & \maxf{93.34 (51)} & 2.14 (19) & -3.36 (386) & 0.12 (2) \\
        & & MGDA-UB & 94.46 (104) & 92.23 (154) & 83.89 (184) & 2.50 (60) & -10.80 (1045) & \maxf{0.06 (2)} \\
        & & GradNorm & 95.19 (37) & \maxf{93.70 (31)} & 93.31 (39) & 2.06 (2871) & -1.81 (3799) & 0.09 (746) \\
        & & IMTL-G & \maxf{95.28 (38)} & \maxf{93.84 (21)} & 93.24 (49) & 1.91 (661) & -0.01 (8248) & 0.07 (205) \\ \cmidrule{3-9}
        & \multirow{7}{*}{\rotatebox[origin=c]{90}{With $\mR_k$}} & Vanilla & 95.13 (20) & \maxf{93.41 (17)} & \maxf{93.54 (50)} & 1.99 (17) & -1.43 (324) & 0.14 (5) \\
        & & GradDrop & \maxf{95.14 (16)} & \maxf{93.47 (12)} & \maxf{93.59 (32)} & 2.00 (6) & -1.30 (182) & 0.16 (4) \\
        & & PCGrad & 95.04 (26) & \maxf{93.36 (30)} & 93.49 (30) & 1.98 (13) & -1.22 (281) & 0.13 (1) \\
        & & MGDA-UB & 89.99 (221) & 86.76 (118) & 79.24 (283) & 3.65 (42) & -29.14 (923) & \maxf{0.06 (0)} \\
        & & GradNorm & \maxf{95.28 (18)} & \maxf{93.56 (25)} & 93.56 (57) & \maxf{1.86 (7)} & \maxf{0.86 (193)} & 0.09 (4) \\
        & & IMTL-G & \maxf{95.47 (27)} & \maxf{93.79 (31)} & 93.56 (57) & \maxf{1.73 (9)} & \maxf{2.12 (146)} & \maxf{0.07 (2)} \\
        & & \ours & \maxf{95.45 (19)} & \maxf{93.83 (19)} & 93.22 (35) & \maxf{1.85 (13)} & \maxf{1.55 (222)} & \maxf{0.08 (3)} \\
        \midrule \midrule
        \multirow{15}{*}{\rotatebox[origin=c]{90}{SVHN}} & \multirow{7}{*}{\rotatebox[origin=c]{90}{Without $\mR_k$}} & Single & \maxf{85.05 (45)} & \maxf{84.58 (24)} & 77.47 (113) & 5.84 (14) & 0.00 (0) & \maxf{0.17 (6)} \\ \cmidrule{3-9}
         & & Vanilla$\ssymbol{2}$ & 84.18 (30) & 84.18 (38) & 80.11 (85) & 4.81 (6) & 5.14 (83) & 2.75 (317) \\
         & & GradDrop & \maxf{84.38 (29)} & 84.48 (41) & 80.11 (69) & \maxf{4.69 (12)} & \maxf{5.68 (105)} & 1.91 (86) \\
         & & PCGrad& \maxf{84.22 (31)} & 84.23 (21) & 79.92 (79) & \maxf{4.69 (9)} & \maxf{5.50 (75)} & 2.26 (85) \\
         & & MGDA-UB & 84.61 (75) & 84.38 (45) & 77.44 (144) & \maxf{4.47 (18)} & 5.99 (148) & \maxf{0.66 (75)} \\
         & & GradNorm  & 84.23 (33) & 84.13 (30) & 79.40 (87) & 4.92 (7) & 4.60 (101) & 4.30 (218) \\
         & & IMTL-G & 84.60 (45) & 84.39 (37) & 79.63 (110) & \maxf{4.57 (13)} & \maxf{5.81 (85)} & 2.47 (165) \\ \cmidrule{3-9}
         & \multirow{7}{*}{\rotatebox[origin=c]{90}{With $\mR_k$}} & Vanilla & 84.11 (48) & 84.11 (40) & 79.83 (79) & 4.84 (10) & 4.78 (88) & 3.04 (253) \\
         & & GradDrop & 84.23 (35) & 84.33 (40) & 80.10 (83) & \maxf{4.73 (9)} & \maxf{5.34 (92)} & 2.99 (259) \\
         & & PCGrad & 84.21 (21) & 84.26 (38) & 79.64 (48) & 4.84 (6) & 5.01 (65) & 2.70 (225) \\
         & & MGDA-UB & 77.05 (544) & 78.00 (504) & 71.76 (432) & 5.27 (56) & -4.36 (672) & \maxf{1.00 (57)} \\
         & & GradNorm & \maxf{84.37 (34)} & 84.30 (46) & 79.97 (75) & \maxf{4.72 (13)} & \maxf{5.24 (89)} & 4.12 (946) \\
         & & IMTL-G & 84.23 (34) & 84.23 (39) & 79.77 (104) & \maxf{4.51 (12)} & \maxf{5.94 (99)} & 1.70 (105) \\
         & & \ours & \maxf{84.60 (50)} & 84.44 (45) & 79.14 (96) & \maxf{4.45 (10)} & \maxf{6.08 (48)} & 1.61 (272) \\
         \bottomrule
    \end{tabular}
    }
\end{table}
}

\newpage

\subsubsection{Illustrative examples}

\begin{wrapfigure}{r}{.55\textwidth}
\vspace{-35pt}
    \centering
    \includegraphics[width=.249\textwidth, keepaspectratio]{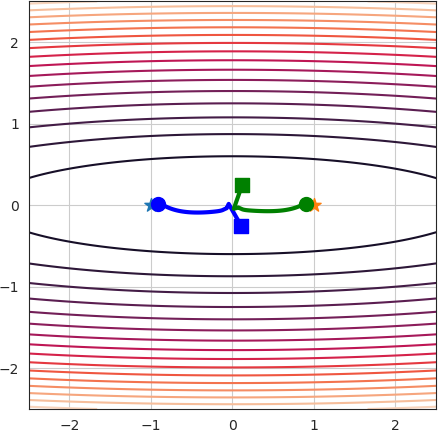}
    \includegraphics[width=.249\textwidth, keepaspectratio]{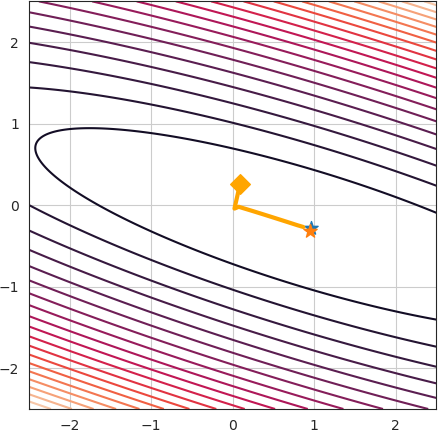}
    \includegraphics[width=.249\textwidth, keepaspectratio]{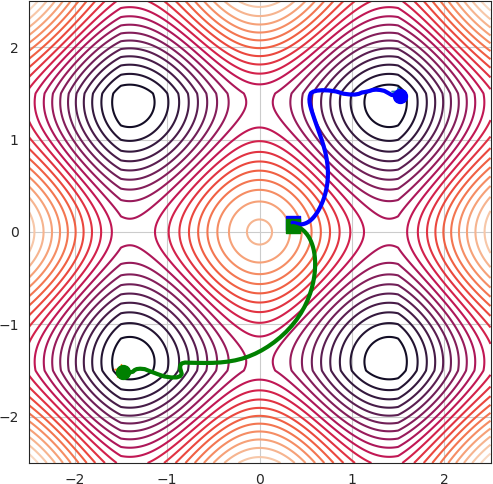}
    \includegraphics[width=.249\textwidth, keepaspectratio]{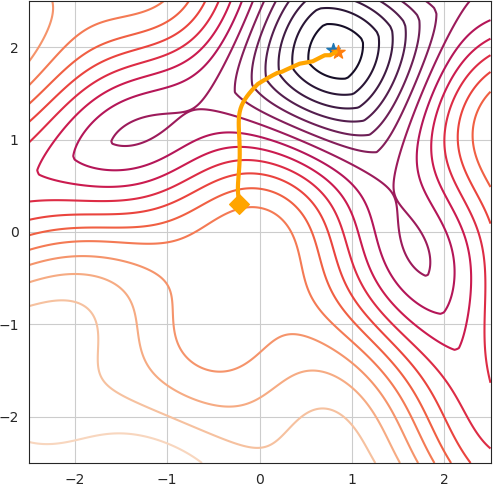}
    \caption{Level plots showing the illustrative examples of \Cref{fig:qualitative-plots} for \ours. Top: Convex case. Bottom: Non-convex case. Left: Active transformation (trajectories of $\vr_k$ and the level plot of $\loss_1 + \loss_2$. Right: Passive transformation (trajectory of $\vz$ and level plot of $(\loss_1 \circ \mR_1) + (\loss_2 \circ \mR_2)$).}
    \label{fig:qualitative-full}
    \vspace{-10pt}
\end{wrapfigure}
We complement the illustrative figures shown in \Cref{fig:qualitative-plots} by providing, for each example, an illustration of the effect of \ours shown as an active and passive transformation.
In an active transformation (\Cref{fig:qualitative-full} left), points in the space are the ones modified. In our case, this means that we rotate feature $\vz$, obtaining $\vr_1$ and $\vr_2$, while the loss functions remain the same. In other words, for each $\vz$ we obtain a task-specific feature $\vr_k$ that optimizes its loss function.
In contrast, a passive transformation (\Cref{fig:qualitative-full} right) keeps the points unaltered while applying the transformation to the space itself. In our case, this translates to rotating the optimization landscape of each loss function (now we have $\loss_k \circ \mR_k$ instead of $\loss_K$), so that our single feature $\vz$ has an easier job at optimizing both tasks. In the case of \ours, we can observe in both right figures that both optima lie in the same point, as we are aligning task gradients.

\begin{wrapfigure}{r}{.55\textwidth}%
    \centering
    \includegraphics[width=.55\textwidth, keepaspectratio]{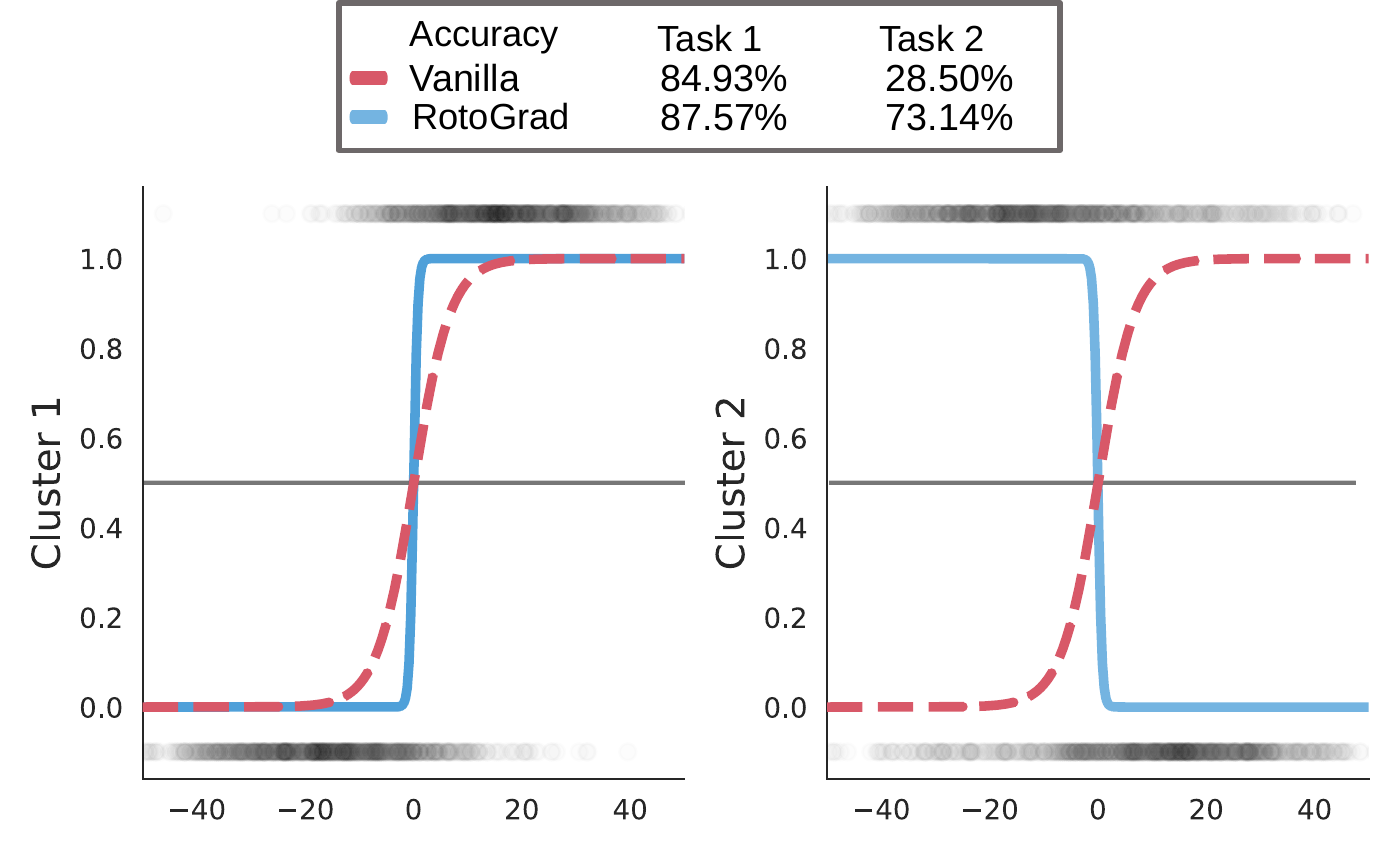}
    \caption{Logistic regression for opposite classification tasks. Test data is plotted scattered as gray dots. \ours learns both opposite rotations $\mR_1 = \mR_2^\top$.}
    \label{fig:toy3}
\end{wrapfigure}

Besides the two regression experiments shown in \Cref{sec:illustrative-examples}, we include here an additional experiment where we test \ours in the worst-case scenario of gradient conflict, \ie, one in which task gradients are opposite to each other. 
To this end, we solve a 2-task binary classification problem where, as dataset, we take 1000 samples from a 2D Gaussian mixture model with two clusters; $\vy_{n,k} = 1$ if $\vx_n$ was sampled from cluster $k$; and $\vy_{n,k} = 0$ otherwise.
We use as model a logistic regression model of the form $\vy_k = \mW_2 \max(\mW_1\vx + \vb_1, 0) + \vb_2$ with $\vb_1\in\sR^{2}, \vb_2\in\sR$, $\mW_1\in \sR^{2\times2}$, and $\mW_2\in\sR^{1\times2}$. Because rotations in 1D are ill-posed (there is a unique proper rotation), here we add task parameters to increase the dimensionality of $\vz$ and make \textit{all parameters shared}, so that there is still no task-specific parameters.
To avoid a complete conflict where $\nabla_\vz \loss_1 + \nabla_\vz \loss_2 = 0$, we randomly flip the labels for the second tasks with \SI{5}{\percent} probability.
\Figref{fig:toy3} shows that, in this extreme scenario, \ours is able to learn both tasks by aligning gradients, \ie, by learning that one rotation is the inverse of the other $\mR_1 = \mR_2^\top$.

\subsubsection{Training stability}

While we showed in \Cref{sec:stability} only the results for the sum-of-digits task as they were nice and clear, here we show in \Figref{fig:lr_all_tasks} the results of those same experiments in \Cref{sec:stability} for all the different tasks.
The same discussion from the main manuscript can be carried out for all metrics. Additionally, we can observe that the vanilla case (learning rate zero) completely overlooks the image-related task (\textit{Active pixels}) while performing the best in the parity task.

Additionally, let us clarify what we mean here with stability, as in the main manuscript we mainly talked about convergence guarantees. 
In these experiments we measure the convergence guarantees of the experiments in terms of `training stability', meaning the variance of the obtained results across different runs.
The intuition here is that, since the model does not converge, we should expect some wriggling learning curves during training and, as we take the model with the best validation error, the individual task metrics should have bigger variance (\ie, less stability) across runs.

\begin{figure}[ht]
    \centering
    \includegraphics[keepaspectratio, width=.95\textwidth]{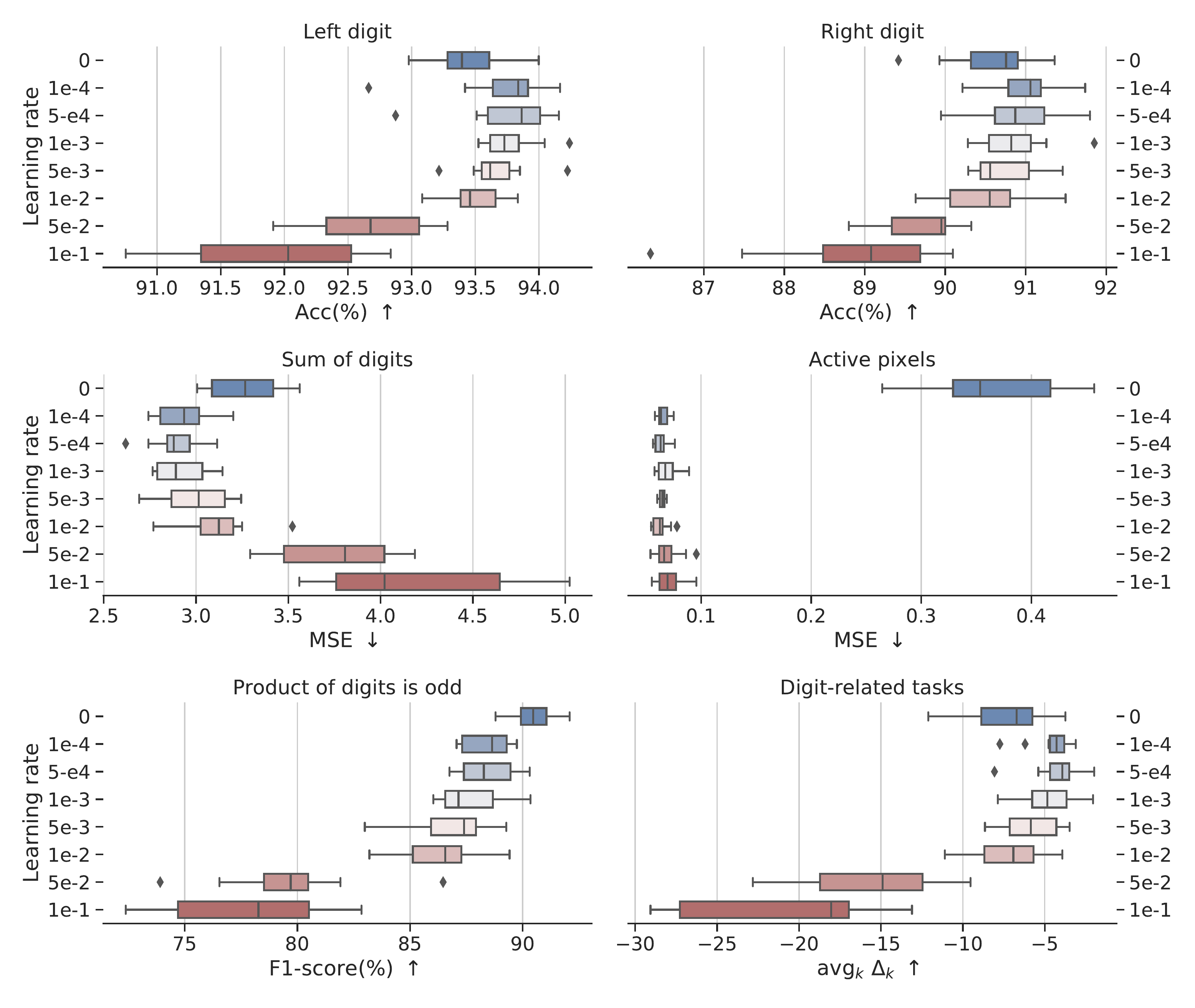}
    \caption{\ours's performance on all tasks for the experiments in \Cref{sec:stability} for all metrics. We can observe training instabilities/stiffness on all tasks as we highly increase/decrease \ours's learning rate, as discussed in the main manuscript.}
    \label{fig:lr_all_tasks}
\end{figure}

\subsubsection{CIFAR10 and CelebA}

For the sake of completeness, we present in \Cref{tab:cifar-full} and \Cref{tab:celeba-full} the same tables as in \Cref{sec:experiments}, but with more statistics of the results.
For CIFAR10, we now included in \Cref{tab:cifar-full} the minimum task improvement across tasks and, while noisier, we can still observe that \ours also improve this statistic. The standard deviation of the task improvement across tasks is, however, not too informative.
We also include in \Cref{fig:cos_sim_sizes} the cosine similarity plots for the different subspace rotations from \Cref{sec:exps-subspace}, showing a clear positive correlation between the cosine similarity and the size of the considered subspace.
While the cosine similarities look low, we want to remark that we are computing the cosine similarity of a huge space, and we only align a subspace of it. If we, instead, showed the cosine similarity with respect to each specific subspace, the cosine similarity should look similar to that of \ours 512.
In the case of CelebA, we added in \Cref{tab:celeba-full} the maximum f1-score across tasks and, similar to the last case, it is not too informative, as all methods achieve almost perfect f1-score in one of the classes.
We also include the training times for some baselines, showing that \ours stays on par with them. %

\begin{figure}[t]
    \centering
    \includegraphics[keepaspectratio, width=.8\textwidth]{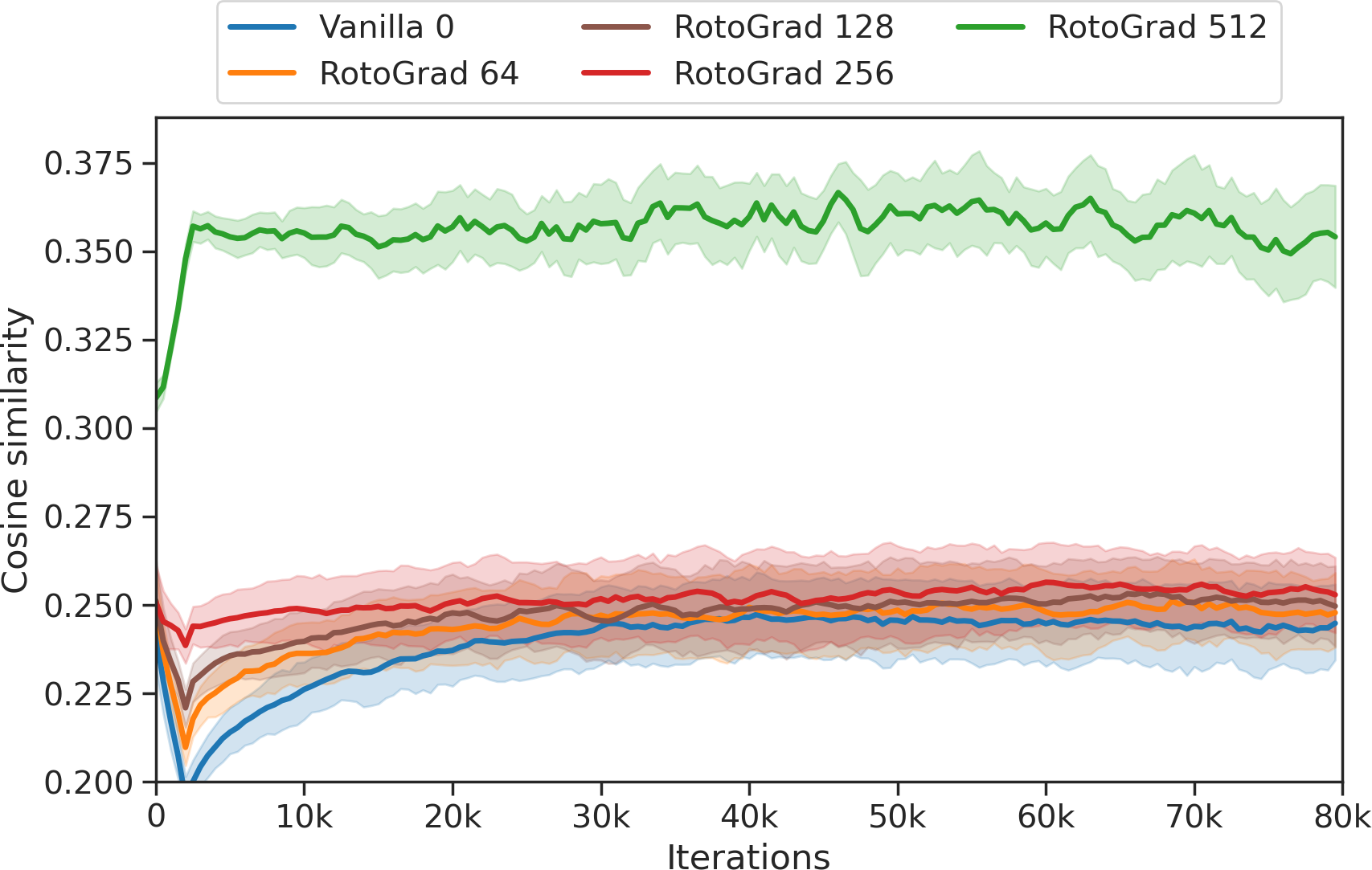}
    \caption{Cosine similarity between the task gradients and the update gradient on CIFAR10. Results are averaged over tasks and five runs, showing \SI{90}{\percent} confidence intervals (shaded areas).}
    \label{fig:cos_sim_sizes}
\end{figure}

{
\sisetup{table-format=2.2(2), detect-all}
\begin{table}[hbtp]
    \centering
    \caption{Complete task-improvement statistics in CIFAR10 for all competing methods and \ours with different dimensionality for $\mR_k$, see \Cref{sec:experiments}.}
    \label{tab:cifar-full}
    \resizebox{\textwidth}{!}{
            \begin{tabular}{ll|SSSSS} \toprule
    \multicolumn{2}{l|}{Method\hfill$d$} & {$\min_k{\Delta_k}$~$\uparrow$} & {$\med_k\Delta_k$~$\uparrow$} & {$\E[k]{\Delta_k}$~$\uparrow$} & {$\std_k\Delta_k$~$\downarrow$} & {$\max_k\Delta_k$~$\uparrow$}  \\ \midrule
    \multicolumn{2}{l|}{Vanilla$\ssymbol{2}$\hfill 0} & -0.81 (37) & 1.90 (53) & 2.58 (54) & 3.38 (94) & 11.14 (335) \\
    \multicolumn{2}{l|}{\ours\hfill 64} & -1.70 (81) & 1.79 (57) & 2.90 (49) & 3.98 (62) & 13.16 (240) \\
    \multicolumn{2}{l|}{\ours\hfill 128} & -1.12 (36) & 2.25 (107) & 2.97 (108) & 3.84 (87) & 12.64 (356) \\
    \multicolumn{2}{l|}{\ours\hfill 256} & 0.17 (101) & 2.16 (72) & 3.68 (68) & 3.83 (74) & 14.01 (322) \\
    \multicolumn{2}{l|}{\ours\hfill 512} & -0.43 (76) & 3.67 (140) & \maxf{4.48 (99)} & 4.23 (82) & \maxf{15.57 (399)} \\ \midrule %
    \multirow{6}{*}{\rotatebox[origin=c]{90}{Without $\mR_k$}} & Vanilla$\ssymbol{2}$ & -0.81 (37) & 1.90 (53) & 2.58 (54) & 3.38 (94) & 11.14 (335) \\ 
    & GradDrop & -0.73 (33) & \maxf{2.80 (20)} & \maxf{3.41 (45)} & 4.08 (34) & \maxf{13.58 (150)} \\
    & PCGrad & -1.52 (98) & 1.95 (87) & 2.86 (81) & 3.74 (69) & 12.01 (319) \\
    & MGDA-UB & -7.27 (136) & -1.21 (74) & -1.75 (43) & 3.24 (55) & 3.67 (98) \\
    & GradNorm & -0.35 (59) & \maxf{2.45 (66)} & 3.23 (35) & 4.02 (33) & 14.25 (135) \\
    & IMTL-G & -0.39 (82) & 1.97 (29) & 2.73 (27) & 3.25 (75) & 10.20 (298) \\ \cmidrule{2-7}
    \multirow{7}{*}{\rotatebox[origin=c]{90}{\shortstack{With $\mR_k$\\$(d = 512)$}}} & Vanilla & -0.85 (58) & 3.10 (129) & 3.12 (79) & 4.05 (56) & \maxf{14.23 (286)} \\
    & GradDrop & -1.49 (78) & 3.27 (161) & 3.54 (110) & 4.11 (56) & \maxf{13.88 (295)} \\
    & PCGrad & -1.44 (58) & 2.67 (88) & 3.29 (46) & 3.90 (37) & 13.44 (186) \\
    & MGDA-UB & -3.59 (148) & 0.57 (62) & 0.21 (67) & \maxf{2.44 (52)} & 4.78 (215) \\
    & GradNorm & -0.79 (128) & \maxf{3.10 (101)} & 3.21 (104) & 3.41 (86) & 10.88 (473) \\
    & IMTL-G & -1.29 (52) & 1.81 (87) & 3.02 (69) & 3.81 (21) & 12.76 (177) \\
    & \ours\hfill & -0.43 (76) & 3.67 (140) & \maxf{4.48 (99)} & 4.23 (82) & \maxf{15.57 (399)} \\ \bottomrule
    \end{tabular}
	}
\end{table}
}

\begin{table}[hbtp] %
    \centering 
    \caption{Complete f1-score statistics and training hours in CelebA for all competing methods and two different architectures/settings (median over five runs), see \Cref{sec:experiments}. For the convolutional network we use $m=256$, and $m=1536$ for the residual network.} %
    \label{tab:celeba-full}
	\resizebox{\textwidth}{!}{
    \setlength\tabcolsep{2.5pt}
    \sisetup{table-format=2.2,detect-weight=true}
    \begin{tabular}{ll|SSSSS|S|SSSSS|S} \toprule
    & & \multicolumn{6}{c|}{Convolutional ($d=512$)} & \multicolumn{6}{c}{ResNet18 ($d=2048$)} \\
    & & \multicolumn{5}{c}{task f1-scores (\%)~$\uparrow$} & &\multicolumn{5}{c}{task f1-scores (\%)~$\uparrow$} & \\
    & Method & {$\min_k$} & {$\med_k$} & {$\operatorname{avg}_k$} & {$\operatorname{std}_k$~$\downarrow$} & {$\max_k$} & {Hours} & {$\min_k$} & {$\med_k$} & {$\operatorname{avg}_k$} & {$\operatorname{std}_k$~$\downarrow$} & {$\max_k$} & {Hours} \\ 
    \midrule
    \multirow{5}{*}{\rotatebox[origin=c]{90}{Without $\mR_k$}} & Vanilla$\ssymbol{2}$ & 1.62 & 53.39 & 58.49 & 24.26 & 96.97 & 7.62 & 15.45 & 63.04 & 62.85 & 22.09 & 96.58 & 1.49 \\
    \cmidrule{2-14}
    & GradDrop & 2.63 & 52.32 & 57.33 & 25.27 & 96.72 & 8.53 & 13.31 & 64.37 & \maxf{63.95} & 20.93 & 96.59 & 1.60 \\
    & PCGrad & 2.69 & 54.60 & 56.87 & 25.75 & 97.04 & 34.05 & 13.61 & 62.45 & 62.74 & 21.60 & 96.64 & 5.75 \\
    & GradNorm & 2.17 & 52.98 & 56.91 & 24.72 & 96.84 & 20.93 & 17.42 & 62.49 & 62.62 & 21.93 & 96.55 & 3.61 \\
    & IMTL-G & 0.00 & 14.81 & 31.90 & 33.58 & 93.31 & 9.46 & 9.87 & 62.22 & 62.03 & 22.47 & 96.51 & 1.73 \\ \midrule
    \multirow{7}{*}{\rotatebox[origin=c]{90}{With $\mR_k$}} & Vanilla & 4.24 & 49.85 & 55.33 & 26.03 & 96.88 & 16.29 & 19.71 & 63.56 & 63.23 & 21.16 & 96.55 & 9.33 \\
    \cmidrule{2-14}
    & GradDrop & 3.18 & 50.07 & 54.43 & 27.21 & 96.80 & 17.20  & 12.33 & 62.40 & 62.74 & 21.74 & 96.65 & 9.41 \\
    & PCGrad & 1.44 & 53.05 & 54.72 & 27.61 & 96.90 & 41.79 & 14.71 & 63.65 & 62.61 & 22.22 & 96.59  & 13.72 \\
    & GradNorm & 2.08 & 52.53 & 56.71 & 24.57 & 96.96 & 30.02 & 9.05 & 60.20 & 60.78 & 22.31 & 96.38 & 11.36 \\
    & IMTL-G & 0.00 & 37.00 & 42.24 & 33.46 & 94.34 & 18.05 & 17.11 & 61.68 & 60.72 & 22.80 & 96.44 & 9.52 \\
    & \ours & 4.59 & 55.02 & 57.20 & 24.75 & 96.79 & 27.20 & 9.96 & 63.84 & 62.81 & 21.80 & 96.45 & 6.68 \\
    \bottomrule
    \end{tabular}
	}
\end{table}

\subsubsection{NYUv2}

Complementing the results shown in \Cref{sec:experiments}, we show in \Cref{tab:nyuv2-all} the results obtained combining \ours with all other existing methods (rows within \ours $+$), for gradient scaling methods we only apply the rotation part of \ours.
Results show that \ours helps improve/balance all other methods, which is specially true for those methods that heavily overlook some tasks. 
Specifically, MGDA-UB stops overlooking the semantic segmentation and depth estimation tasks, while PCGrad and GradDrop stop completely overlooking the surface normal loss. 
Note that we also show in \Cref{tab:nyuv2-all} the training times of each method, and \ours stays on par with non-extended methods in training time. 
As mentioned in \cref{app:setups}, due to cluster overload, some times were deceivingly high (specifically those baselines with $\mR_k$) as we had to run them on different machines, and were omitted to avoid confusion.

{
\setlength\tabcolsep{0.55em}
\sisetup{round-mode = places, round-precision = 2, detect-weight=true,detect-inline-weight=math}
\begin{table*}[t]
    \centering
    \caption{Results for different methods on the NYUv2 dataset with a SegNet model. \ours obtains the best performance in segmentation and depth tasks on all metrics, while significantly improving the results on normal surfaces with respect to the vanilla case.}
    \label{tab:nyuv2-all}
    \resizebox{\textwidth}{!}{
    \begin{tabular}{ll|S[round-precision=1, table-format=3.1]S[round-precision=1, table-format=2.1]S[round-precision=1, table-format=2.1]|SSSSSSSSS|c} \toprule
    & & \multicolumn{3}{c|}{\multirow{2}{*}{\shortstack{Relative\\improvements~$\uparrow$}}} & \multicolumn{2}{c}{\multirow{1}{*}{\shortstack{\textbf{S}egmentation~$\uparrow$}}} & \multicolumn{2}{c}{\multirow{1}{*}{\shortstack{\textbf{D}epth~$\downarrow$}}} & \multicolumn{5}{c|}{\textbf{N}ormal Surfaces} & Time~$\downarrow$ \\
     & & \multicolumn{3}{c|}{} & \multicolumn{2}{c}{} & \multicolumn{2}{c}{} & \multicolumn{2}{c}{Angle Dist.~$\downarrow$} & \multicolumn{3}{c|}{Within $t^{\circ}$~$\uparrow$} &  \\ 
    & Method & {${\Delta_S}$} & {${\Delta_D}$} & {${\Delta_N}$} & {\small mIoU} & \multicolumn{1}{c}{\small Pix Acc} & {\small Abs.} & \multicolumn{1}{c}{\small Rel.} & {\small Mean} & {\small Median} & {\small $11.25$} & {\small $22.5$} & \multicolumn{1}{c|}{\small $30$} & \SI{}{\hour} \\ \midrule
    & Single & 0.00 & 0.00 & \maxf{0.00} & 39.21 & 64.59 & 0.70 & 0.27 & \maxf{25.09} & \maxf{19.18} & \maxf{30.01} & \maxf{57.33} & \maxf{69.30} & 8.90 \\ \midrule \midrule
    \multirow{5}{*}{\rotatebox[origin=c]{90}{\ours $+$}} & GradDrop & \maxf{1.17} & 12.59 & \maxf{-7.51} & \maxf{40.26} & \maxf{65.63} & 0.63 & 0.24 & \maxf{26.33} & \maxf{21.08} & \maxf{26.47} & \maxf{53.38} & \maxf{66.05} & 3.94 \\
    & PCGrad & \maxf{-0.01} & \maxf{19.72} & \maxf{-8.32} & \maxf{39.08} & \maxf{64.68} & 0.54 & \maxf{0.21} & \maxf{26.41} & \maxf{21.29} & \maxf{26.13} & \maxf{52.99} & \maxf{65.72} & 3.89\\
    & MGDA-UB & \maxf{2.46} & \maxf{23.18} & \maxf{-8.11} & 39.32 & \maxf{65.48} & \maxf{0.54} & \maxf{0.21} & \maxf{26.43} & \maxf{21.22} & \maxf{26.16} & \maxf{53.16} & \maxf{66.07} & 3.85 \\
    & GradNorm & \maxf{1.06} & \maxf{21.38} & \maxf{-7.70} & 39.08 & \maxf{65.43} & \maxf{0.54} & \maxf{0.21} & \maxf{26.44} & \maxf{21.42} & \maxf{26.17} & \maxf{52.59} & \maxf{65.52} & 3.84 \\
    & IMTL-G & 1.65 & 21.19 & \maxf{-6.93} & 40.13 & 65.17 & 0.55 & \maxf{0.21} & \maxf{26.20} & \maxf{21.06} & \maxf{26.69} & \maxf{53.39} & \maxf{66.04} & 3.96 \\ \midrule
    \multirow{9}{*}{\rotatebox[origin=c]{90}{\shortstack{With $\mR_k$ $(m=1024)$}}} & Rotate Only & \maxf{3.28} & \maxf{20.49} & \maxf{-6.56} & \maxf{39.63} & \maxf{66.16} & \maxf{0.53} & \maxf{0.21} & \maxf{26.12} & \maxf{20.93} & \maxf{26.85} & \maxf{53.76} & \maxf{66.50} & 3.82 \\ %
    & Scale Only & \maxf{-0.27} & \maxf{20.01} & \maxf{-7.90} & \maxf{38.89} & \maxf{65.94} & 0.54 & 0.22 & \maxf{26.47} & \maxf{21.24} & \maxf{26.24} & \maxf{53.04} & \maxf{65.81} & 3.87 \\
    & \ours & \maxf{1.83} & \maxf{24.04} & \maxf{-6.11} & \maxf{39.32} & \maxf{66.07} & \maxf{0.53} & 0.21 & \maxf{26.01} & \maxf{20.80} & \maxf{27.18} & \maxf{54.02} & \maxf{66.53} & 3.83 \\ \cmidrule{2-15} %
     & Vanilla & -2.66 & 20.58 & -25.70 & 38.05 & 64.39 & 0.54 & 0.22 & 30.02 & 26.16 & 20.02 & 43.47 & 56.87 & 3.81 \\  \cmidrule{2-15}
    & GradDrop & -0.90 & 13.97 & -25.18 & 38.79 & 64.36 & 0.59 & 0.24 & 29.80 & 25.81 & 19.88 & 44.08 & 57.54 & 4.01 \\
    & PCGrad & -2.67 & 20.47 & -26.31 & 37.15 & 63.44 & 0.55 & \maxf{0.22} & 30.06 & 26.18 & 19.58 & 43.51 & 56.87 & 3.89 \\
    & MGDA-UB & -31.23 & -0.65 & \maxf{0.59} & 21.60 & 51.60 & 0.77 & 0.29 & \maxf{24.74} & \maxf{18.90} & \maxf{30.32} & \maxf{57.95} & \maxf{69.88} & 3.85 \\
    & GradNorm & -0.55 & 19.50 & \maxf{-10.45} & 37.22 & 63.61 & 0.54 & 0.22 & \maxf{26.68} & \maxf{21.67} & \maxf{25.95} & \maxf{52.16} & \maxf{64.95} & 3.85 \\
    & IMTL-G & -0.32 & \maxf{17.56} & \maxf{-7.46} & 38.38 & \maxf{64.66} & \maxf{0.54} & 0.22 & \maxf{26.38} & \maxf{21.35} & \maxf{26.56} & \maxf{52.84} & \maxf{65.69} & 3.99 \\ \cmidrule{1-15}
    \multirow{6}{*}{\rotatebox[origin=c]{90}{Without $\mR_k$}} & Vanilla$\ssymbol{2}$ & -0.94 & 16.77 & -25.03 & 37.11 & 63.98 & 0.56 & 0.22 & 29.93 & 25.89 & 20.34 & 43.92 & 57.39 & 3.46 \\
    & GradDrop & -0.10 & 15.71 & -26.99 & 37.51 & 63.62 & 0.59 & 0.23 & 30.15 & 26.33 & 19.32 & 43.15 & 56.59 & 3.55 \\
    & PCGrad & -0.51 & 19.97 & -24.63 & 38.51 & 63.95 & 0.55 & 0.22 & 29.79 & 25.77 & 20.61 & 44.22 & 57.64 & 3.51 \\
    & MGDA-UB & -32.19 & -8.22 & \maxf{1.50} & 20.75 & 51.44 & 0.73 & 0.28 & \maxf{24.70} & \maxf{18.92} & \maxf{30.57} & \maxf{57.95} & \maxf{69.99} & 3.52 \\
    & GradNorm & 2.18 & \maxf{20.60} & \maxf{-10.23} & 39.29 & \maxf{64.80} & \maxf{0.53} & \maxf{0.22} & \maxf{26.77} & \maxf{21.88} & \maxf{25.39} & \maxf{51.78} & \maxf{64.76} & 3.50 \\
    & IMTL-G & \maxf{1.92} & \maxf{21.35} & \maxf{-6.71} & \maxf{39.94} & \maxf{65.96} & \maxf{0.55} & \maxf{0.21} & \maxf{26.23} & \maxf{21.14} & \maxf{26.77} & \maxf{53.25} & \maxf{66.22} & 3.61 \\ \bottomrule
    \end{tabular}
	}
    \vspace{-10pt}
\end{table*}
}

        \addtocontents{toc}{\endgroup}
    \end{appendices}
\end{document}